\newlength{\bsize}
\def\relu{\phi}
\def\E{\mathbb{E}}
\def\R{\mathbb{R}}
\def\X{\mathcal{X}}
\def\I{\mathcal{I}}
\def\F{\mathcal{F}}
\def\S{\operatorname{S}}
\def\hf{\widehat{f}}
\def\hk{\widehat{k}}
\def\tf{\tilde{f}}
\def\EF{\operatorname{EF}}
\theoremstyle{plain} 
\newtheorem{theorem}{Theorem}
\newtheorem{lemma}{Lemma}
\newtheorem{definition}{Definition}
\newtheorem{remark}{Remark}
\definecolor{iccvblue}{rgb}{0.21,0.49,0.74}
\title{
On the Complexity-Faithfulness Trade-off of Gradient-Based Explanations
}
\author{
Amir Mehrpanah \quad Matteo Gamba \quad Kevin Smith \quad Hossein Azizpour \\
KTH Royal Institute of Technology, Sweden\\
{\tt\small \{amirme, mgamba, ksmith, azizpour\}@kth.se}
}
\begin{document}
\maketitle

\begin{abstract}
ReLU networks, while prevalent for visual data, have sharp transitions, sometimes relying on individual pixels for predictions, making vanilla gradient-based explanations noisy and difficult to interpret. 
Existing methods, such as GradCAM, smooth these explanations by producing surrogate models at the cost of faithfulness. 
We introduce a unifying spectral framework to systematically analyze and quantify smoothness, faithfulness, and their trade-off in explanations.
Using this framework, we quantify and regularize the contribution of ReLU networks to high-frequency information, providing a principled approach to identifying this trade-off. 
Our analysis characterizes how surrogate-based smoothing distorts explanations, leading to an ``explanation gap'' that we formally define and measure for different post-hoc methods.
Finally, we validate our theoretical findings across different design choices, datasets, and ablations.\footnote{\href{https://github.com/Amir-Mehrpanah/On-the-Complexity-Faithfulness-Trade-off-of-Gradient-Based-Explanations-ICCV25/tree/main}{\texttt{https://github.com/Amir-Mehrpanah/On-the-\\Complexity-Faithfulness-Trade-off-of-Gradient-\\Based-Explanations-ICCV25/}}}
\end{abstract}
   
\section{Introduction}
The growing complexity of deep networks necessitates explainability, especially in safety-critical applications. 
Gradient-based explanation methods are widely used in computer vision for this aim. 
These methods, however, suffer from two main shortcomings: lack of \textit{smoothness} or \textit{faithfulness}, the trade-off of which is the focus of our study.

\noindent\textbf{Explanation complexity.} A crucial property for explanations to be human-understandable is their simplicity, which translates to smoothness\footnote{We may refer to smoothness and complexity interchangeably.} in saliency-based explanations. 
However, both theoretical~\cite{Gamba_2023_BMVC,jacot_neural_2020} and empirical~\cite{rosca_case_2021,smilkov_smoothgrad_2017} studies suggest that ReLU networks, the most common variant of deep networks used for visual data, have sharp transitions, sometimes relying on individual pixels for predictions~\cite{su_one_2019}.
This sharpness can result in complex gradient-based explanations that hinder interpretability. 
Thus, as our preliminary step, we establish a formal connection between a network's architecture and complexity of its explanations. With this formalism, we particularly study ReLU networks.

\noindent\textbf{Explanation faithfulness.} To reduce the explanation complexity, post-hoc techniques produce smooth surrogates during the explanation phase~\cite{mehrpanah_spectral_2025}, at the cost of lower faithfulness to the original network, leading to an inherent trade-off between explanation complexity and faithfulness. 
Therefore, it is vital to carefully consider this trade-off when developing, choosing, or using post-hoc methods~\cite{han_which_2022,adebayo_sanity_2020}. While several metrics are proposed to measure complexity (\eg, entropy-based methods~\cite{bhatt_evaluating_2020}) and faithfulness (\eg, pixel removal scores~\cite{bhatt_evaluating_2020}), these metrics disjointly measure one of the two aspects and are influenced by extraneous factors such as the choice of baseline and removal order~\cite{wickstrom_flexibility_2024,kindermans_reliability_2017}. These caveats of existing metrics impede a principled analysis of the complexity-faithfulness trade-off~\cite{ribeiro_how_2024,chen_true_2020}. 
Our goal is to formally study this trade-off. 
Specifically, we introduce consistently-defined measures of complexity and faithfulness with which we empirically analyze the existing post-hoc explanation methods, and our method of mitigation of sharpness in ReLU networks. 

\noindent\textbf{Our Approach.} We adopt a \textit{spectral} perspective to formally establish a relationship between a network's architecture and its gradient-based explanation. 
We first measure explanation complexity via spatial high-frequency content of input gradients (\cref{sec:complexity-to-tsps}), and formally link it to the high-frequency components, or sharp changes, of a network (\cref{sec:from-tps-to-tsps}).
This connection, importantly, allows for characterization and improvement of network architectures for their explanation complexity, which we showcase in~\cref{sec:controling-the-tps}. 
Finally, in~\cref{sec:gap}, we leverage our formalism in a unified framework to analyze the complexity-faithfulness trade-off, \ie, by measuring complexity as ``expected frequency'' and faithfulness as the ``gap'' in the expected frequency of the original and surrogate functions. 
We use this framework to analyze (smoothed versions of) ReLU networks and various post-hoc explanations; see~\cref{fig:banner}.  

\setlength{\bsize}{0.159\textwidth}
\begin{figure*}[t]
\centering
\begin{subfigure}[t]{\bsize}
    \centering
    \includegraphics[width=\linewidth]{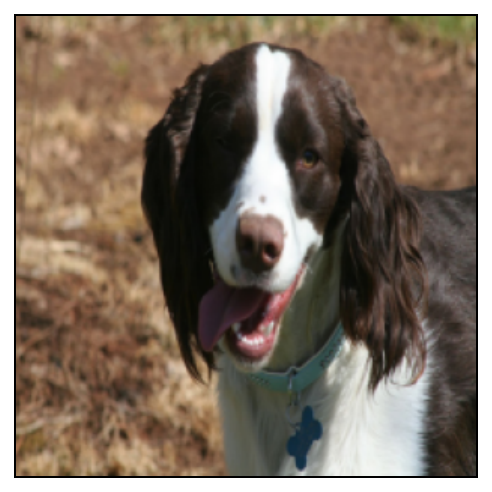}
    \caption{Input}
\end{subfigure}
\begin{subfigure}[t]{\bsize}
    \centering
    \includegraphics[width=\linewidth]{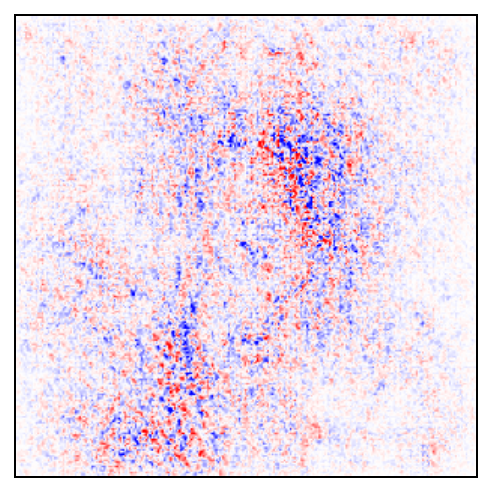}
    \caption{ReLU:VG}
    \label{fig:banner-b}
\end{subfigure} 
\begin{subfigure}[t]{\bsize}
    \centering
    \includegraphics[width=\linewidth]{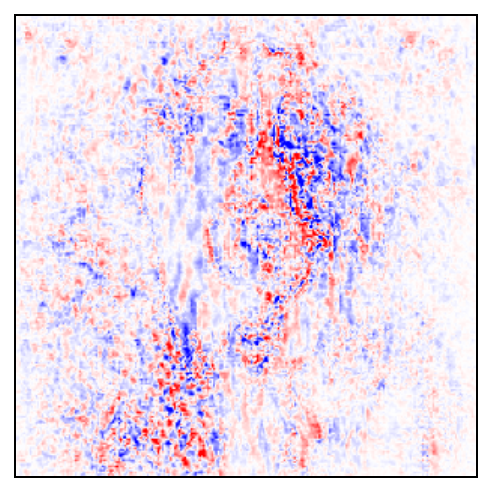}
    \caption{ReLU:SG}
\end{subfigure} 
\begin{subfigure}[t]{\bsize}
    \centering
    \includegraphics[width=\linewidth]{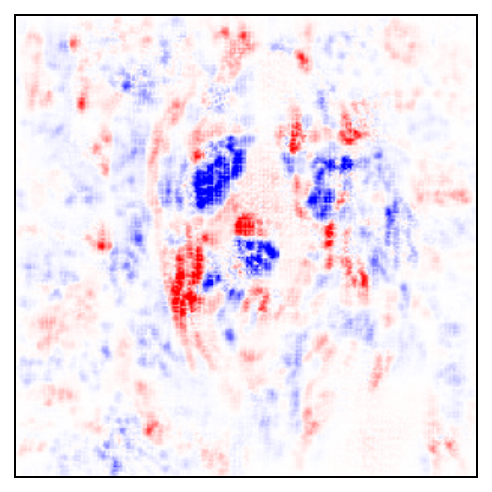}
    \caption{$\operatorname{SP}(\beta=.9)$:VG}
\end{subfigure}
\begin{subfigure}[t]{\bsize}
    \centering
    \includegraphics[width=\linewidth]{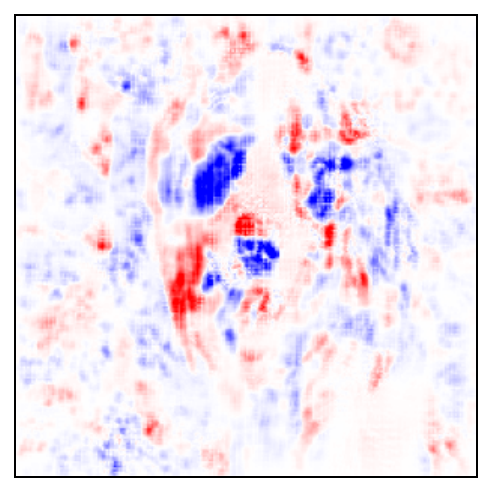}
    \caption{$\operatorname{SP}(\beta=.9)$:SG}
\end{subfigure} 
\begin{subfigure}[t]{1.11\bsize}
    \vspace{-2.8cm}
    \centering
    \includegraphics[width=\linewidth,trim=0mm 3.1mm 0mm 0mm, clip=true]{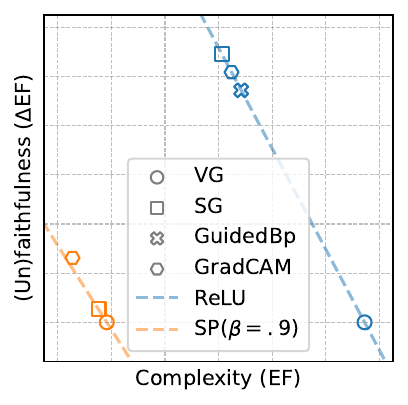}
    \caption{}
    \label{fig:banner-tails}
\end{subfigure}
   \caption{
   \textbf{Exploring the Faithfulness-Complexity Trade-Off.}
    This figure shows the trade-off typically made between faithfulness and complexity in post-hoc explanation methods.  
    \textbf{(b)} presents VanillaGrad (VG) explanations for a ReLU network, which appear grainy due to the network's reliance on high-frequency information. To mitigate this, post-hoc methods like SmoothGrad (SG), shown in \textbf{(c)}, remove high-frequency components by creating surrogates, thereby introducing a trade-off between complexity and faithfulness, quantified by \(\EF\) and \(\Delta\EF\), respectively. This trade-off for a ReLU network is represented by the dashed blue line in \textbf{(f)}.
    In this work, we propose a framework which can control a ReLU network's reliance on high-frequency information by modifying the tail of its power spectrum. This is achieved by training a network with a smooth parameterization (SP) of ReLU, obtained by convolving ReLU with a Gaussian function, where the standard ReLU corresponds to \(\beta \to \infty\).  
    \textbf{(d)} and \textbf{(e)} show the VG and SG explanations for such a network, denoted as \(\operatorname{SP}(\beta = 0.9)\). As shown, reducing the network's dependence on high-frequency components leads to VG explanations with lower complexity, enabling alternative trade-offs. One such trade-off is depicted by the dashed orange line in \textbf{(f)}—see~\cref{fig:more-examples} for more examples.
   }
\label{fig:banner}
\end{figure*}

\noindent\textbf{Our Contributions.} 
We summarize in the following list:
\begin{itemize}
    \item 
    \textbf{A Framework for Formal Analysis:} We introduce a framework based on a formal connection between a network’s power spectrum tail and the complexity of its explanations, 
    which provides a foundation for analyzing the complexity-faithfulness trade-off. 
    Using this framework, we analyze the role of ReLU in inducing sharp transitions in the network, leading to complex explanations. 

    \item 
    \textbf{The Expected Frequency (EF):} We introduce a metric to \textit{quantify} a network’s reliance on high-frequency information. We reduce EF by convolving ReLU with a Gaussian function, hence \textit{controlling} the complexity of VanillaGrad explanations (\cref{sec:ef}).
    
    \item
    \textbf{The Explanation Gap:} We introduce the \textit{``explanation gap''}, a measure for quantifying the discrepancy between explanations produced for surrogates and the original model, highlighting potential faithfulness risks associated with post-hoc explanation methods, in~\cref{sec:gap}. 
    
    \item
    \textbf{Empirical Validation:}
    We empirically validate our theoretical contributions across various datasets of varying input sizes, such as ImageNet, Imagenette, CIFAR10, and Fashion-MNIST. Moreover, we conduct several ablation studies, in~\cref{sec:other-decisions}, including the effect of other design decisions such as skip connections and batch norm.

\end{itemize}

\section{Related Works}
\label{sec:related-works}
Although inherent and post-hoc explainability are distinct concepts~\cite{rudin_stop_2019}, both can rely on gradients as their explainer, as seen in methods like B-cos~\cite{bohle_b-cos_2022} and Integrated Gradients~\cite{sundararajan_axiomatic_2017}, respectively.
We focus on methods that utilize gradients as their explainer, without making a distinction between post-hoc or inherent explainability.

\subsection{Explanation for Smooth Surrogates}
Our work is related to research on explainability methods that incorporate smooth surrogates, either as part of their design or in post-processing. Whether explicit or implicit, a common part in these methods is the use of smooth surrogates to improve visual quality.

One approach to creating smooth surrogates is through a \textit{perturbation distribution}, which can involve noise in the input space~\cite{smilkov_smoothgrad_2017}, noise in the parameter space~\cite{bykov_noisegrad_2022}, input space traversal~\cite{kapishnikov_guided_2021,schrouff_best_2022}, spatial Gaussian blur~\cite{bohle_b-cos_2022}, or removing negative gradient values~\cite{kim_why_2019,springenberg_striving_2015}.

Another method for creating surrogates is by using \textit{subnetworks}, as in GradCAM~\cite{selvaraju_grad-cam_2020}, or by making \textit{modifications} to the original structure, as seen in LRP~\cite{bach_pixel-wise_2015}, DeepLIFT~\cite{shrikumar_not_2017}, and their variants.

Throughout our work, we base our formal analysis on the simplest gradient-based method, VanillaGrad~\cite{simonyan_deep_2014}, which does not involve any form of smooth surrogate.

\subsection{Spectral View of Explanation}
Closely related to ours is the research that adopts a spectral perspective on explanations~\cite{kolek_cartoon_2022,kolek_explaining_2023,tsuzuku_structural_2019}, where the Fourier basis is defined along the spatial dimensions, such as width and height. While these methods typically use optimization to impose certain spectral properties on the explanations, our work leverages the spectral view solely for measurements in spatial domain.
Also, our work is connected to spectral view to networks' learnt functions~\cite{kadkhodaie_generalization_2024,marchetti_harmonics_2024,rahaman_spectral_2019}.

\subsection{Metrics for Evaluating Explanations}
Evaluating explanations remains challenging, with quite a few quantitative techniques available. 
Prior work has assessed explanations through human studies~\cite{lage_human_2019} and auxiliary tasks to verify attribution relevance~\cite{yang_benchmarking_2019}, while others test whether explanation features are meaningful to the model~\cite{camburu_can_2019}. 
A common approach, called pixel removal score~\cite{bhatt_evaluating_2020}, generates out-of-distribution samples, prompting a computationally heavy remove-and-retrain strategy~\cite{hooker_benchmark_2019}. 
Moreover, the correlation between output and pixel removal score is used as faithfulness metric~\cite{bhatt_evaluating_2020}.
However, despite a high correlation, explanations for surrogate and original models can differ in high-frequency components~\cite{tsuzuku_structural_2019}. 
Unlike these metrics, our formalism provides hyperparameter-free metrics with clearer intuition that better capture feature interactions.
\section{Model Structure: A Root Cause for Explanation Complexity}
\label{sec:ef}
As widely discussed in the literature, there is a recognized need to refine raw gradient-based explanations, commonly known as VanillaGrad, which are sometimes referred to as ``noisy''~\cite{bykov_noisegrad_2022,smilkov_smoothgrad_2017}.  
This stems from the human preference for simpler and more interpretable explanations.  

Input gradients of deep networks often appear scattered and noisy~\cref{fig:banner-b}. However, an important question arises: \textit{Are we truly observing noise in VanillaGrad explanations, or do these high-frequency variations stem from an underlying structural property of the network?}  


In the literature, this noisiness is commonly referred to as ``Explanation Complexity’’. In this section, we introduce a formalism to characterize a root cause for gradient-based explanations’ complexity: \emph{
the network’s architecture, which influences its explanation complexity}. 

After introducing our notation in~\cref{sec:complexity-to-tsps}, we
measure explanation complexity by a summary statistic of (high) spatial frequency content of input gradients. 
This is followed by establishing a formal connection between the high spatial frequency of input gradients and high frequency contents (or sharpness) of a network’s function in~\cref{sec:from-tps-to-tsps}. 
This central connection provides interesting implications for understanding and reducing explanation complexity. 

In~\cref{sec:controling-the-tps}, we reveal one such important implication for the most common family of deep networks in the visual domain, \ie ReLU networks. 
Finally, in~\cref{sec:gap}, we leverage our formalism for
a principled analysis of the explanation complexity and its trade-off with faithfulness to the model--two
axes to evaluate explanation methods that are measured in a unified framework, for the first time, in this work. 

\subsection{Notations}
To maintain generality and simplicity, we consider a scalar multivariate classifier denoted by $f:\R^n\rightarrow\R$. 
In the context of gradient-based explainability, we are particularly interested in computing the gradient of the network $f$ trained on a dataset $\X$ with respect to an input sample $x$, \ie $x\mapsto\nabla_xf$.  
For brevity, we may omit the $x$-subscript on the gradient operator, as gradients are consistently taken with respect to the input throughout this article.

Building on prior work that adopts a spectral perspective on neural networks~\cite{rahaman_spectral_2019}, we introduce a compact notation, 
to represent the Fourier basis. Under general assumptions, such as absolute integrability ($\int|f(x)|dx<\infty$), a function $f(x)$ can be decomposed into its frequency components: 
$\hf(\omega)=\int f(x)\psi(\omega,x) dx$,
yielding an alternative representation of the function in the frequency domain, which we denote by $\hf(\omega)$.
To analyze the frequency content of the explanations, we will also use the power spectrum (PS) of the signals, defined as the squared magnitude of the Fourier transform, $\S_f(\omega)=|\hf(\omega)|^2$.

A subtle technical detail arises in defining the Fourier basis along the spatial dimensions (orthogonal to the gradient direction) and along the pixel values (parallel to the gradient direction). We will focus only on the tail behavior of two power spectra in our analysis: 1) the Tail of Spatial Power Spectrum \textit{(TSPS)} of input-gradient of a network, where spatial means that the Fourier basis are defined along the spatial dimensions, \ie width and height of the input-gradient. 2) the Tail of Power Spectrum \textit{(TPS)} of a network, where the Fourier bases are defined along the pixel values.

\subsection{Explanation Complexity and TSPS of Gradient}
\label{sec:complexity-to-tsps}
\begin{figure*}[t]
\centering
\begin{subfigure}[t]{0.2465\linewidth}
    \centering
    \includegraphics[width=\linewidth]{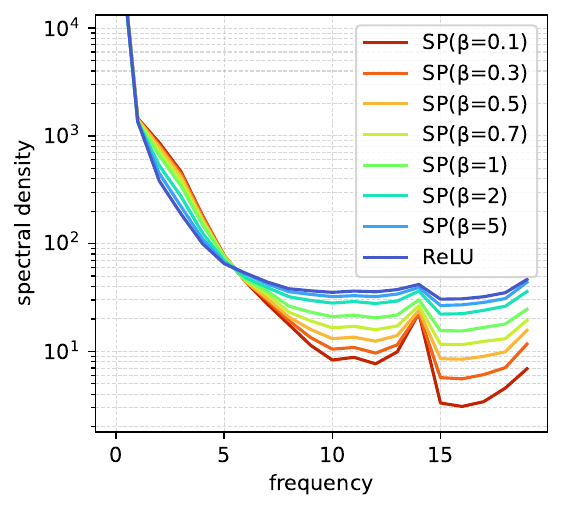}
    \caption{}
\end{subfigure}
\begin{subfigure}[t]{0.2465\linewidth}
    \centering
    \includegraphics[width=\linewidth]{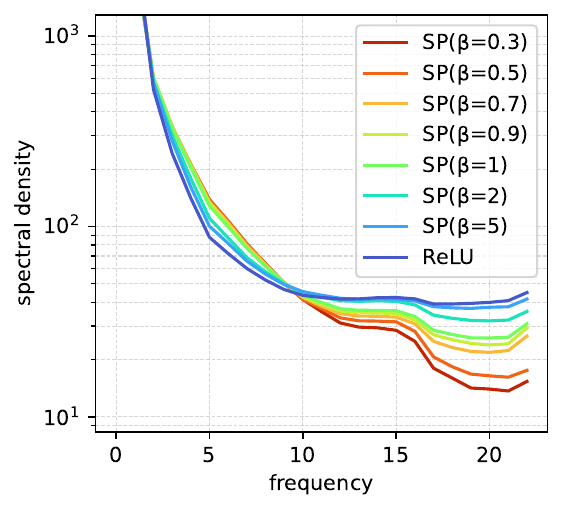}
    \caption{}
\end{subfigure}
\begin{subfigure}[t]{0.2465\linewidth}
    \centering
    \includegraphics[width=\linewidth]{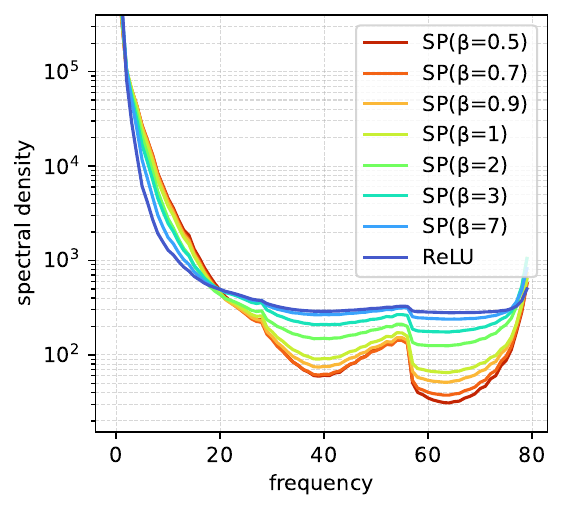}
    \caption{}
\end{subfigure}
\begin{subfigure}[t]{0.2465\linewidth}
    \centering
    \includegraphics[width=\linewidth]{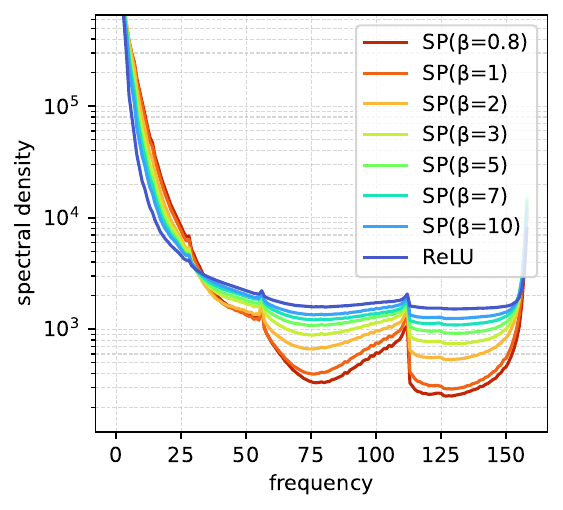}
    \caption{}
\end{subfigure}
   \caption{
   \textbf{ReLU Affects Gradient TSPS via Network's TPS.}
   This figure presents a practical implication of our theory on the tail behavior of the spatial power spectrum in relation to function sharpness, as formalized in Lemma~\ref{lemma:1}.
   The plots depict the TSPS of input-gradients of networks trained with \textit{identical} architectures and hyperparameters, varying \textit{only} in smoothness parameter of ReLU (indicated in the legend) and datasets: \textbf{(a)} Fashion MNIST, \textbf{(b)} CIFAR10, \textbf{(c)} Imagenette (112$\times$122), and \textbf{(d)} Imagenette (224$\times$224). Refer to~\cref{tab:hparams} in the Appendix for details on the hyperparameters used.
   In all plots, reducing smoothness by increasing $\beta$ (shown as SP($\beta$) in the legends) results in power spectra with heavier tails, leading to more complex explanations. Since larger complexity necessitates removing more of the tail, this contributes to a larger explanation gap, defined in~\cref{eq:delta-ef}.
   Considering that not all frequencies are equally relevant for classification, notable spikes in mid-range frequencies are particularly evident in \textbf{(a)}, \textbf{(c)}, and \textbf{(d)}. Additionally, some spectra exhibit a jump at the extreme end of the tail (\ie, very high frequencies). We hypothesize that this is due to the model’s reliance on edges of the objects, which are less distinct in CIFAR10 but more pronounced in Fashion MNIST, explaining the absence of a sharp tail in \textbf{(c)} and its rise in other cases.}
\label{fig:psd-tail-inputsizes}
\end{figure*}

The noisier the explanation, the heavier the tail of the spatial power spectrum, a trend that can also be observed empirically in~\cref{fig:psd-tail-inputsizes}.  
Thus, a natural way to quantify the complexity of explanations is through a summary statistic that captures the tail behavior of the spatial power spectrum of the input gradient.  

To this end, we introduce \textit{Expected Frequency}, a simple yet effective summary statistic defined as:
\begin{equation}
 \EF(e_f(x))\coloneq
 \int \omega \S_{e_f(x)}(\omega)d\omega.  
 \label{eq:ef}
\end{equation}
Here, $\S$ represents the spatial power spectrum of the explanation method $e_f:\R^n\rightarrow\R^n$ applied to the function $f$, which may simply be VanillaGrad. Hereafter, we may drop the subscript $f$ from $e_f$ for notational simplicity.
As we focus on image data, we use the radial average in frequency domain to have a 1D power spectrum $\S(\omega)$.


Since the concept of high-frequency information is inherently relative, 
it can be measured in various ways.
We found that expected frequency provides an effective statistic of model’s reliance on high-frequency features.
Nonetheless, the use of expectation is not pivotal for the results and is solely based on simplicity and practical effectiveness.
The utility of EF is emphasized in the following remark:

\begin{remark}
According to~\cref{eq:ef}, Expected Frequency is influenced by both the model and the explanation method used. A lower EF value corresponds to smoother, or simpler, explanations in the spatial domain.
\end{remark}

We hypothetize that expected frequency is influenced by neural networks' dependence on high-frequency components for making predictions. 
It is crucial to note that EF captures the tail behavior of spatial power spectrum of the gradient. 
In contrast, a network's utilization of high-frequency information is quantified by the tail of its power spectrum.
In the next section, we establish a connection between the tail spatial power spectrum of the gradient and the tail of power spectrum of the network in data modalities with high input feature correlation.

\subsection{TPS of a Network and TSPS of Gradient}
\label{sec:from-tps-to-tsps}
Having established a measure for explanation complexity in the spatial domain, we now take steps toward addressing the question: \textit{What causes the input gradient to look noisy?}

Our goal is to support an intuitive yet challenging-to-formalize claim: the presence of noise in the input gradient reflects the network's reliance on high-frequency information for its predictions. Ideally, if we had direct access to the network's Fourier transform, we could compute its power spectrum and directly compare it with the input gradient spectrum to verify this claim. However, since the Fourier transform of arbitrary networks is inaccessible, we instead have to look at the network’s power spectrum through the aperture of the spatial power spectrum of its gradient.

This idea is captured in the following theorem, which establishes a relationship between the tails of two power spectra: the tail of the spatial power spectrum of the input gradient and the tail of the power spectrum of the network itself. For clarity and narrative coherence, we defer the proof and technical details to~\cref{sec:proof-technical}.

\begin{figure*}[t]
\centering
\begin{subfigure}[t]{0.2465\linewidth}
    \centering
    \includegraphics[width=\linewidth]{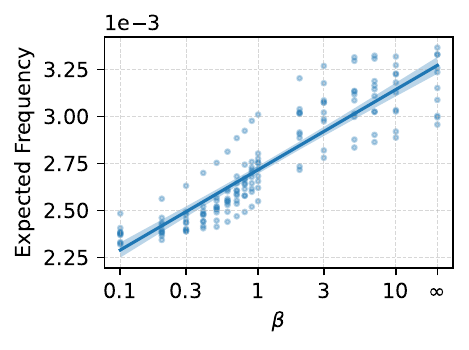}
    \caption{}
\end{subfigure}
\begin{subfigure}[t]{0.2465\linewidth}
    \centering
    \includegraphics[width=\linewidth]{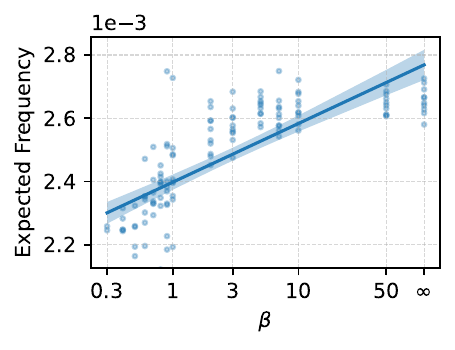}
    \caption{}
\end{subfigure}
\begin{subfigure}[t]{0.2465\linewidth}
    \centering
    \includegraphics[width=\linewidth]{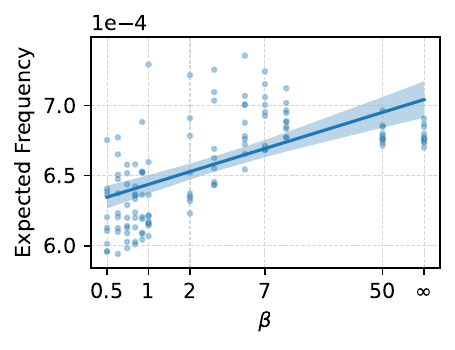}
    \caption{}
\end{subfigure}
\begin{subfigure}[t]{0.2465\linewidth}
    \centering
    \includegraphics[width=\linewidth]{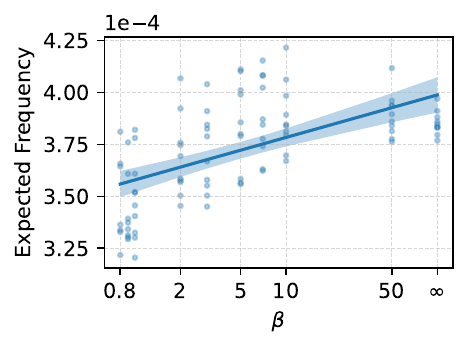}
    \caption{}
\end{subfigure}
   \caption{
   \textbf{Impact of Smoothness Parameter on Expected Frequency.}
   This figure shows EF, defined in~\cref{eq:ef}, used as a summary statistic of the frequency content of explanations in the spatial domain with respect to the smoothness parameter $\beta$ in a (log) x-axis. 
   To obtain the $95\%$ CI, we repeated experiments using 10 seeds for each dataset:  
   \textbf{(a)} Fashion MNIST, \textbf{(b)} CIFAR-10, \textbf{(c)} Imagenette ($122\times122$), and \textbf{(d)} Imagenette ($224\times224$).
   The results consistently show that, on average, EF in the spatial domain increases with the smoothness parameter $\beta$, where $\beta\rightarrow\infty$ corresponds to a ReLU network. 
   The variance observed in the scatter plots, we believe, is due to the weight initialization.}
\label{fig:ef-inputsizes}
\end{figure*}
\begin{theorem}[informal]
In data domains with high input feature correlation, \eg image data; given a trained neural network $f(x)$, the tail behavior of the power spectrum of $f(x)$, is directly proportional to that of the spatial power spectrum of $\nabla f(x)$, denoted by $\S_{\nabla f}(\omega)$.
\label{thm:1}
\end{theorem}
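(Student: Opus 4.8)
The plan is to bridge the two distinct Fourier domains---the value-space spectrum $\S_f$ (the TPS) and the spatial spectrum $\S_{\nabla f}$ (the TSPS)---through a directional-derivative reading of the gradient's spatial transform, and to let the high-correlation hypothesis do the work of transporting tail behaviour from one domain to the other. First I would record an exact, assumption-free identity: viewing the input gradient $\nabla f(x)\in\R^n$ as an image and letting $\psi_\omega\in\R^n$ be the spatial Fourier mode with entries $\psi(\omega,p)$ over pixel locations $p$, the spatial Fourier coefficient at frequency $\omega$ is $\widehat{\nabla f}(\omega)=\langle\nabla f(x),\psi_\omega\rangle=\left.\tfrac{d}{dt}f(x+t\,\psi_\omega)\right|_{t=0}$, i.e. the directional derivative of $f$ along the spatial sinusoid $\psi_\omega$. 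This already converts a statement about the gradient's spatial spectrum into one about how sharply $f$ responds to being probed along spatial modes.

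Next I would pass to the ensemble (data-averaged) spectrum, so that the claim becomes a statement about operators rather than about a single point. Writing $\S_{\nabla f}(\omega)=\E_{x\sim\X}|\langle\nabla f(x),\psi_\omega\rangle|^2=\psi_\omega^\top G\,\psi_\omega$, with the uncentred gradient-covariance $G\coloneq\E_{x}[\nabla f(x)\,\nabla f(x)^\top]$, shows that the TSPS is exactly the diagonal of $G$ read off in the spatial Fourier basis. In parallel, the elementary derivative rule $\widehat{\nabla f}_{\mathrm{val}}(\xi)=i\,\xi\,\hf(\xi)$ gives the value-space gradient spectrum $\|\xi\|^2\,\S_f(\xi)$, which is the precise sense in which the gradient inherits the TPS, weighted by $\|\xi\|^2$. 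The theorem then reduces to showing that the high-$\omega$ diagonal of $G$ in the spatial basis is governed, up to a proportionality factor, by the high-$\xi$ tail of $\|\xi\|^2\S_f(\xi)$.

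The remaining, and decisive, step is to formalize ``high input feature correlation'' so as to align the spatial and value-space axes. I would model the data as (approximately) second-order stationary, so that $\Sigma\coloneq\E_x[(x-\mu)(x-\mu)^\top]$ is diagonalized by the spatial Fourier basis with rapidly decaying eigenvalues $\lambda(\omega)$ (the natural-image $1/\omega^2$-type law); stationarity makes $\{\psi_\omega\}$ the intrinsic coordinates of the data, so that a probe along a high spatial frequency $\omega$, after whitening by $\sqrt{\lambda(\omega)}$, corresponds to moving along a value-space direction whose characteristic value-frequency grows with $\omega$. Under this alignment the high-$\omega$ diagonal of $G$ tracks the high-$\xi$ behaviour of $\|\xi\|^2\S_f(\xi)$, delivering $\S_{\nabla f}(\omega)\propto(\text{TPS tail})$ modulo the correlation weight $\lambda(\omega)$; the ``high correlation'' regime is precisely the requirement that this weight reshapes but does not annihilate the tail.

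I expect this last bridge to be the main obstacle, since the spatial and value-space transforms live on genuinely different spaces and the identification is only meaningful asymptotically---``directly proportional'' should be read as matching tail exponents rather than exact equality, and a concrete covariance or generative model of $\X$ is needed to control how the pointwise directional derivatives aggregate under $\E_x$. As a sanity check, and for the ReLU case the paper cares about, I would verify the claim directly from piecewise linearity: there $\nabla f$ is piecewise constant, so its spatial spectrum is controlled by the geometry of the linear-region boundaries, tying the TSPS tail to the density of kinks (sharp transitions) that is also responsible for the heavy TPS tail of $f$.
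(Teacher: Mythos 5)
Your setup is sound and the identities you record are correct: the spatial Fourier coefficient of the gradient image as a directional derivative $\langle\nabla f(x),\psi_\omega\rangle=\tfrac{d}{dt}f(x+t\,\psi_\omega)\big|_{t=0}$, the quadratic-form reading $\S_{\nabla f}(\omega)=\psi_\omega^\top G\,\psi_\omega$, and the value-space derivative rule giving $\|\xi\|^2\S_f(\xi)$. You have also correctly located the crux: the two transforms live on different spaces, and everything hinges on transporting tail behaviour from the value-space frequency $\xi$ to the spatial frequency $\omega$. But that decisive step is exactly where your argument stops being a proof. The claim that, under second-order stationarity of the data, "a probe along a high spatial frequency $\omega$, after whitening by $\sqrt{\lambda(\omega)}$, corresponds to moving along a value-space direction whose characteristic value-frequency grows with $\omega$" is asserted without a mechanism. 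Stationarity of the input covariance tells you how the \emph{data} distribute energy across spatial modes; it says nothing by itself about how the \emph{learned function's} value-space spectrum $\S_f(\xi)$ is pulled back onto the spatial axis through $G$. A concrete counter-worry: $G$ aggregates first-order directional responses of $f$ at data points, while the TPS tail of $f$ is a global property of $\hf(\xi)$; without a structural model of $f$ there is no reason the high-$\omega$ diagonal of $G$ should inherit the tail exponent of $\|\xi\|^2\S_f(\xi)$ rather than, say, being dominated by low-$\xi$ content of $f$ evaluated along high-$\omega$ input directions.

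The paper closes this bridge with a different and more concrete mechanism that your proposal lacks. It represents $f$ via the representer theorem through a shift-invariant (NTK-like, Laplace-type) kernel, $f(x)=\sum_i\alpha_i k(x,x_i)$, reduces to a single training trajectory $x_t(\tau)$ and a single explained trajectory $x_e(\tau)$ parameterized by the spatial coordinate $\tau$, and then uses the high-spatial-correlation assumption to justify a first-order linearization $\Delta(\tau)=x_t(\tau)-x_e(\tau)\approx\alpha(\tau-\tau^*)$ near an intersection point. Because $k$ is shift-invariant, $k(x_e(\tau),x_t(\tau))=k(\Delta(\tau))\approx k(\alpha(\tau-\tau^*))$, so the value-space argument of the kernel becomes an affine function of $\tau$ and the spatial Fourier transform of the gradient is literally a rescaled copy of $\hk$: $|\F_\tau\{x_e'(\tau)\}|^2=\tfrac{\omega^2}{\alpha^2}|\hk(\omega)|^2$. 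That change of variables is the missing idea in your write-up; it is what makes the identification of the two frequency axes exact (locally, hence asymptotically for the tail) rather than a hoped-for alignment of eigenbases. If you want to salvage your operator-level route, you would need to supply an analogous structural assumption on $f$ (your final ReLU piecewise-linearity remark gestures at one, since the linear-region boundaries play the role of the kernel's sharp transitions, but it is not developed). As it stands, your proposal is a well-organized reduction to the hard step, not a proof of it.
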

\begin{proof}[Proof sketch.]
We note that the decay rate of the tail of the power spectrum of a function is influenced by the sharp changes in the function created by the network. 
We show that the decay rate depends on the existence of such sharp transitions rather than the number of their occurrences.
Therefore, we consider a single training sample and a single explained sample with at least one sharp change. 
By applying a linear approximation around this sharp change in the spatial domain, we make the spatial Fourier transform analytically tractable; see~\cref{fig:kernel-shape-and-sharpness,fig:low-feature-autocorrelation} for a conceptual visualization, and~\cref{sec:proof-technical} for technical details.
\end{proof}

In simple terms, assuming a high correlation between input features in the spatial domain (such as images), a heavier tail in the network's power spectrum leads to a heavier tail in the spatial power spectrum. Intuitively, a heavier tail indicates higher explanation complexity, leading to a higher expected frequency, as shown in~\cref{fig:banner} (see~\cref{sec:theoretically-unproven-assumptions}).

A key aspect of \cref{thm:1} is the emphasis on the ``tail''. This terminology signifies that the high-frequency characteristics of the network—specifically, its tail behavior—are mirrored in the tail of the spatial power spectrum of the gradient. However, this correspondence may not accurately reflect the network's behavior at or near zero frequency.


In this section, we formalized an intuitive statement in ~\cref{thm:1}.
Next, we explore the practical implications of this theory, which also serve as an empirical validation of our insights.
Specifically, we tweak the sharp transitions introduced by ReLU in the network and test whether our theoretical predictions align with empirical observations.

\subsection{On the TPS of ReLU Networks}
\label{sec:controling-the-tps}
Having established the connection between the TPS of the network and the TSPS of its input gradient, we now turn to the first practical implication of our theory. Specifically, in this section, we control the TPS of a ReLU network and observe its impact on the TSPS of the input gradient.
 
Given its widespread use in modern architectures, and the growing evidence that ReLU induces sharp transitions in neural networks~\cite{Gamba_2023_BMVC,jacot_neural_2020,rosca_case_2021,smilkov_smoothgrad_2017}, we focus on the effect of ReLU for studying its effect on the TPS of a network. 
Nonetheless, there are many other design decisions, see~\cref{sec:other-decisions}, that can be studied in this regard.

To this end, we introduce the following lemma, which provides a simple approach to reduce the sharpness of transitions in the final trained function, thereby leading to a faster-decaying tail of the power spectrum of the network.

\begin{lemma}[informal]
Let $\xi$ be a smoothed version of an activation function $\phi$, achieved by convolving it with a Gaussian function with precision $\beta$, \ie $\xi = \phi*g_{\beta}$.
Let $f_{\phi}(x)$ and $f_{\xi}(x)$ denote the classifiers trained on $\X$ using activation functions $\phi$ and $\xi$, respectively.
Then $f_{\phi}$, exhibits a heavier tail in its power spectrum compared to $f_{\xi}$.
\label{lemma:1}
\end{lemma}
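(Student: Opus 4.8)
The plan is to make precise the informal claim that convolving the activation $\phi$ with a Gaussian $g_\beta$ produces a trained network whose power spectrum (in the sense of TPS, i.e. Fourier along pixel values) decays faster. The cleanest route is to work locally: by the reasoning already invoked in the proof sketch of \cref{thm:1}, the tail decay of the power spectrum is governed by the \emph{sharpest} transition the network creates along the relevant direction, not by the number of such transitions. So I would reduce the statement to comparing the local regularity of $f_\phi$ and $f_\xi$ near a single sharp transition, and then translate a regularity gain into a tail-decay gain via the standard Fourier fact that smoother functions (more continuous derivatives, or faster-decaying local singularities) have faster-decaying spectra.

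First I would record the key analytic property of the smoothing operation: since $\xi = \phi * g_\beta$ and $g_\beta$ is a Gaussian (hence $C^\infty$ with rapidly decaying Fourier transform), $\xi$ inherits infinite smoothness even when $\phi = \relu$ has a kink, and in the frequency domain $\widehat{\xi}(\omega) = \widehat{\phi}(\omega)\,\widehat{g_\beta}(\omega)$, where $\widehat{g_\beta}$ is itself Gaussian and thus suppresses high $\omega$ super-polynomially. This is the clean, self-contained core: at the level of a single activation unit, $\S_\xi(\omega) = \S_\phi(\omega)\,|\widehat{g_\beta}(\omega)|^2$, so the tail of $\S_\xi$ is strictly lighter than that of $\S_\phi$ by a Gaussian factor, with the gap widening as $\beta\to 0$ (more smoothing) and vanishing as $\beta\to\infty$ (recovering $\relu$), exactly matching the claimed limit.

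Next I would propagate this single-unit statement to the whole trained network. Here I would lean on the linear-approximation device from \cref{thm:1}: around one sharp transition, the network behaves like a composition that is affine except through the nonlinearity at that transition, so the local singularity structure of $f$ along the pixel-value direction is controlled by the activation's singularity. Replacing $\phi$ by the smoother $\xi$ strictly improves the local regularity of the composite map, hence — by the same linearized-Fourier computation used for \cref{thm:1} — strictly lightens the tail of the network's power spectrum. I would state explicitly that this is a statement about the \emph{trained} functions $f_\phi, f_\xi$, and note that we compare them at their respective sharpest transitions; the monotonicity in $\beta$ then follows from the monotonicity of the Gaussian suppression factor.

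The main obstacle is the word ``trained'': $f_\phi$ and $f_\xi$ are different minimizers obtained from different optimization landscapes, so there is no pointwise identity relating them, and in principle training could place the smoothed network's sharpest transition somewhere that partially offsets the smoothing gain. I would handle this honestly by flagging it as the reason the lemma is stated \emph{informally}: the rigorous content is the per-unit Fourier inequality $\S_\xi = \S_\phi\,|\widehat{g_\beta}|^2$ together with the \cref{thm:1} reduction to a single transition, and the passage to trained networks rests on the assumption — consistent with the empirical TSPS curves in \cref{fig:psd-tail-inputsizes} and the EF trends in \cref{fig:ef-inputsizes} — that smoothing the activation cannot manufacture sharper transitions than $\relu$ already supplies. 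I would therefore present the architecture-level inequality as the backbone and defer the optimization-level subtlety to the discussion of assumptions (\cref{sec:theoretically-unproven-assumptions}), rather than attempting a full learning-theoretic argument.
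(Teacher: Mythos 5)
Your route is genuinely different from the paper's, and the difference is where the gap lies. You argue from the convolution theorem applied to the activation itself ($\widehat{\xi}=\widehat{\phi}\,\widehat{g_\beta}$) plus a local-regularity propagation step, and you correctly flag that the passage from ``smoother activation'' to ``smoother \emph{trained} function'' is the weak link, deferring it to an assumption. The paper's proof (\cref{sec:relu-vs-Softplus}) closes exactly that gap by working inside the NTK framework rather than with the activation's own Fourier transform: it computes the $\tau$-transform $\tau_{\phi*g_\beta}(c;p)=\E[\xi(z_1)\xi(z_2)]$ and shows, by pushing the Gaussian convolution into the measure via a change of variables, that it equals $\tau_{\phi}(c;q)$ for a \emph{modified} bivariate Gaussian $q$ whose diagonal covariance entries are inflated by the variance of the smoothing Gaussian while the off-diagonal entry $c\sigma^2$ is unchanged. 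Since $c\in[-1,1]$ is fixed, the effective correlation shrinks, the covariance moves toward a multiple of the identity, and the resulting kernel $K^{(\mathrm{NTK})}(c)=K^{(0)}(c)+cK^{(1)}(c)$ is smoother; the same manipulation covers $K^{(1)}$ because $(\phi*g_\beta)'=\phi'*g_\beta$. The RKHS characterization $\int|\widehat{f}|^2/\widehat{k}\,d\omega<\infty$ then forces the trained function's spectrum to decay at least as fast as that of the smoother kernel, independently of where optimization places the transitions---which is precisely the learning-theoretic step you left open.

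Two further cautions about your version. First, $\widehat{\phi}$ for ReLU exists only distributionally (ReLU is not integrable), so the ``per-unit inequality $\S_\xi=\S_\phi\,|\widehat{g_\beta}|^2$'' that you call the rigorous backbone needs careful interpretation before it can bear weight. Second, the linearization device of \cref{thm:1} is built for the \emph{spatial} Fourier transform of the gradient of a shift-invariant kernel, not for propagating activation regularity through a deep composition, so borrowing it for your propagation step is not a like-for-like substitution. Your qualitative conclusion is right and your honesty about the trained-network obstacle is well placed, but the $\tau$-transform identity is the actual mechanism by which the paper makes the lemma go through, and it does not appear in your argument.
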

\begin{proof}[Proof sketch.] Building on prior work in kernel methods, we show that convolving ReLU with a Gaussian results in a smoother kernel without explicit characterization of the kernel. See~\cref{sec:relu-vs-Softplus} for formal details.
\end{proof}

To summarize, using the above lemma, we can control the TPS of the network, and through~\cref{thm:1}, we expect to observe its effect on the TSPS of the input gradient. 
Specifically, we employ a smooth parameterization of ReLU and gradually interpolate from a smoothed version toward the standard ReLU. 
This corresponds to increasing the precision parameter $\beta$ of the Gaussian convolution, where ReLU is recovered in the limit $\beta\rightarrow\infty$. 
This process effectively interpolates between networks with smoother boundaries—exhibiting faster decay rates in their power spectrum—toward ReLU-like networks characterized by sharp transitions and heavier tails.

Based on our theory, we anticipate that as $\beta$ increases, the TSPS of the input gradient will exhibit heavier tails, as illustrated in~\cref{fig:psd-tail-inputsizes}. 
This trend should also manifest in our measure of explanation complexity, namely, the expected frequency. 
Since a lower EF corresponds to simpler explanations, we expect smoother explanations as $\beta$ decreases, which can also be seen in~\cref{fig:banner}. 
The results in~\cref{fig:ef-inputsizes} confirm this, consistently showing that EF in the spatial domain increases with the smoothness parameter $\beta$ of ReLU.

We would like to highlight that~\cref{lemma:1} is being leveraged in two ways: (1) to validate our theoretical prediction regarding the effect of the TPS of the network on the TSPS of the input gradient and (2) to demonstrate a practical application of our theory beyond merely quantifying explanation complexity—namely, controlling it. 
Additionally, \cref{lemma:1} has a straightforward proof and is empirically verified in\cref{sec:relu-vs-Softplus}.
Please refer to~\cref{sec:implementation-details} for implementation details and hyperparamters.

As a more general application of this framework, in the next section, we introduce measures for faithfulness and examine its trade-off to complexity as measured by EF.

\section{The Complexity-Faithfulness Trade-off}
\label{sec:gap}
Surrogate models used for explanations aim to reduce complexity, often at the cost of faithfulness. This trade-off raises concerns, as the disparity between explanations generated by the original and surrogate models can be arbitrarily large~\cite{su_one_2019}. These methods rely on heuristically chosen hyperparameters, resulting in implicit surrogates with unknown properties, including their performance. This discrepancy can be significant enough to generate the same explanations for different model behaviors.

Thus, it is crucial to formally define the explanation gap in order to bring the faithfulness-complexity trade-off to the attention of researchers in XAI. We introduce a unifying spectral framework quantifying both aspects systematically.

\subsection{The Explanation Gap}
Let $\tf$ denote a (typically implicit) surrogate model for a function $f$, generated by an explanation method $e$. We define the \textit{explanation gap} as the squared $L^2$-norm of the difference between the gradients of $f$ and $\tf$:
\begin{equation}
    \mathcal{G}_{\X}(f,\tf)= \int_{x\in\X}\|\nabla f(x)-\nabla\tf(x)\|_2^2dx
    \label{eq:def-g-time-domain}
\end{equation}
Here, the subscript $\X$ represents the training data used to learn $f$, emphasizing the dataset dependence of the measure. This dependence means that explanation gaps are \textit{not} directly comparable across different datasets without appropriate normalization. For brevity, we may omit $\X$, though its influence remains implicit.

Notably, this measure exhibits two scaling behaviors: (1) it scales with the input dimension, and (2) it indirectly depends on the explanation method used to generate the surrogate $\tf$. 
As a result, different post-hoc explanation methods yield different explanation gaps.

\begin{remark}
An immediate property of this definition is that when using \textit{VanillaGrad}, which does not create a surrogate model (\ie, $\tf = f$), we obtain $\mathcal{G} = 0$.
\end{remark}

Since explanation methods are typically designed and refined through trial and error rather than with the explicit goal of constructing smooth surrogate models, the resulting surrogates are often implicit and inaccessible. This makes direct measurement of the explanation gap challenging, necessitating the identification of a suitable proxy measure.

In~\cref{sec:gap-in-fourier-domain}, we express the explanation gap in the Fourier domain to establish a connection between the gap and the TPS of the network. This reformulation provides several benefits: (1) it offers deeper insights into the nature of the explanation gap, (2) it links the gap to our definition of expected frequency, and (3) it clarifies the impact of surrogates on the explanation gap.

To build the necessary intuition, we first briefly introduce explanation methods in the Fourier domain in~\cref{sec:surrogates-low-pass}. For a more detailed discussion, see~\cref{sec:explanations-are-low-pass} and~\cite{mehrpanah_spectral_2025}.

\subsection{Explanation Methods as Low-Pass Filters}
\label{sec:surrogates-low-pass}
Previous works have provided unifying definitions of explanation methods, including set-theoretic~\cite{covert_explaining_2022} and probabilistic~\cite{mehrpanah_spectral_2025} perspectives. We adopt the probabilistic view, as it is more suitable for a spectral analysis.



In this view, many gradient-based explanations can be written as:
\begin{equation}
e_f(x) = \E_{\tilde{x}\sim \mathcal{N}(x)}[\nabla f(\tilde{x})],
\label{eq:exp-def}
\end{equation}
where $\tilde{x} \sim \mathcal{N}(x)$, a method-specific perturbation distribution. This form unifies a range of methods (see~\cite{mehrpanah_spectral_2025}, App. C) and facilitates a spectral analysis of perturbations, particularly in the Fourier domain (see~\cref{sec:explanations-are-low-pass}).

To develop theoretical insights, we focus on VanillaGrad, corresponding to $\mathcal{N}(x) = \delta(x)$, \ie, no perturbation. As a pure gradient method, it reveals the network’s full spectral behavior without surrogate-induced filtering.


In the Fourier domain:
\begin{equation}
\nabla f(x) \propto \int \omega \widehat{f}(\omega), d\omega,
\end{equation}
highlighting that, in ReLU networks, high-frequency components (in TPS), dominate the explanation (in TSPS), often resulting in noisy and grainy visualizations~\cite{mehrpanah_spectral_2025}, particularly in ReLU networks.


By contrast, perturbation-based methods suppress high-frequency components, effectively acting as low-pass filters. This smoothing improves visual clarity but may compromise faithfulness by shifting explanations away from the original model.



In the next section, we build on this formalism to refine our definition of the explanation gap.

\subsection{The Explanation Gap in Fourier Domain}
\label{sec:gap-in-fourier-domain}
Given that perturbations in explanation methods, or the creation of surrogates in general, are to suppress the high-frequency behavior of $f$, we can refine the definition in~\cref{eq:def-g-time-domain} to better account for this phenomenon.
By applying Parseval's theorem—which states that the $L^2$-norm of a function is equal in both the time and Fourier domains—we can rewrite the gap as follows:
\begin{align}
    \mathcal{G}(f,\tf)&=\int_{x\in\X}\|\nabla f(x)-\nabla\tf(x)\|^2dx\nonumber\\
    &\propto\int_{\omega\in\R}\omega^2\|\hf(\omega)-\widehat{\tf}(\omega)\|^2d\omega
\end{align}
The above integral can be broken down into high and low frequencies for some threshold $\omega^*$. Since surrogates suppress high-frequency components, one expects the integral on low frequency components to vanish, \ie $\int_{\omega\in\mathcal{F}_{\text{low}}}\approx0$. Therefore, we can approximate this as:
\begin{equation}
    \mathcal{G}(f,\tf)\approx\int_{\omega\in\mathcal{F}_{\text{high}}}\omega^2\|\hf(\omega)-\widehat{\tf}(\omega)\|^2d\omega
\end{equation}
This formulation emphasizes that the explanation gap is largely influenced by the high-frequency components of the classifier—specifically, \textit{the model's reliance on high-frequency features}, as captured by the TPS of $f$. However, a familiar challenge arises once again: the network’s power spectrum is not directly accessible, making it difficult to analyze its tail behavior.

Recall from~\cref{sec:ef} that we introduced expected frequency as a means to indirectly infer properties of the network's TPS through the TSPS of the input gradient. By leveraging this connection, we can now close the loop and use variations in EF as a proxy for the explanation gap, as detailed in~\cref{sec:quantify-the-gap}.

\subsection{Empirical Evaluation of the Explanation Gap}
\label{sec:quantify-the-gap}
In the previous sections, we expressed the explanation gap in the Fourier domain, showing that it is primarily influenced by the TPS of the network. We also established that the TPS is reflected in the spatial power spectrum, $\S_{\nabla f}$, and introduced expected frequency as a measure of the network’s reliance on high-frequency information, summarizing the TSPS of the input gradient. In this section, we bring these connections together to quantify the explanation gap.

More formally, we can use the absolute change in $\EF$ caused by an explanation method as a proxy for explanation gap, expressed as:
\begin{equation}
    \mathcal{G}(f,\tf) \sim \Delta \EF(e_f)\coloneq |\EF(\nabla f) -\EF(e_f)|
    \label{eq:delta-ef}
\end{equation}
See \cref{sec:derivations-delta-ef} for derivations. 

\begin{remark}
Due to the inherent scaling properties of the gap, this quantity is primarily meaningful in a relative sense when comparing different explanation methods. Informally, it reflects the \textit{degree of engineering} involved in creating a surrogate to achieve lower-complexity explanations.

Thus, $\Delta\EF$ reveals a range of behaviors rather than a dichotomy between post-hoc and ante-hoc explainability.
\end{remark}

This definition closely resembles a direct difference in the pixel domain, at the same time offers advantages such as shift invariance (via the Fourier transform) and rotation invariance (through the power spectrum). It also allows for flexible normalization strategies in the Fourier domain. While this measure can be highly correlated with direct pixel-based differences, its foundation in spectral analysis provides a more formal and unifying perspective on explanation complexity and faithfulness.

As demonstrated in~\cite{mehrpanah_spectral_2025}, many post-hoc explanation methods function as low-pass filters, suppressing high-frequency model behavior through implicit surrogate models. 
Having introduced the explanation gap as a measure of faithfulness—supported theoretically in~\cref{thm:1}—we now use this proxy to quantify the gap introduced by various post-hoc explainability methods. 
However, this should not be interpreted as an endorsement of methods that suppress high-frequency components implicitly, as such alterations may distort the original model's behavior in ways that are unknown and difficult to quantify.

To quantify the explanation gap, we compute EF, defined in~\cref{eq:ef} and measure its change after introducing a surrogate, using this difference as a proxy measure, as in~\cref{eq:delta-ef}. Note that the smoothness can be seen even in single images (\cref{fig:more-examples}), however, in~\cref{tab:positive-gap-act,tab:positive-gap-arch} we report our EF measure average over 1K images. We observe stds to be
vanishingly small, and as such chose not to report it.

\begin{table}
\begin{center}
\begin{tabular}{l|l|c|c}
  & \multirow{2}{*}{Method} &\multicolumn{2}{|c}{$\downarrow\EF$ + $\downarrow\Delta \EF$}\\
  \cline{3-4}
  && ReLU & $\operatorname{SP}(\beta=.9)$\\
\hline\hline
\multirow{7}{*}{\rotatebox[origin=c]{90}{Imagenette--CNN}}& VanillaGrad~\cite{simonyan_deep_2014} & $.390+\Delta .000$ & $.202+\Delta .000$\\
& SmoothGrad~\cite{smilkov_smoothgrad_2017} & $.286+\Delta.104$ & $.196+\Delta.005$\\
& IntGrad~\cite{sundararajan_axiomatic_2017} & $.396+\Delta.007$ & $.205+\Delta.003$\\
& GuidedBp~\cite{springenberg_striving_2015} & $.300+\Delta.090$ &  $.202+\Delta.000$\\
& DeepLift~\cite{shrikumar_not_2017} & $.394+\Delta.005$ & $.204+\Delta.002$\\
& GradCAM~\cite{selvaraju_grad-cam_2020} & $.293+\Delta.097$ & $.177+\Delta.025$\\
& LRP~\cite{binder_layer-wise_2016} & $.394+\Delta.005$ & Undefined\\
\hline
\end{tabular}
\end{center}
\caption{
\textbf{The Continuous Range of Faithfulness-Complexity Trade-offs.}
This table shows the evaluation of expected frequency in~\cref{eq:ef} and explanation gap in~\cref{eq:delta-ef} on different post-hoc explanation methods, scaled by $10^4$. As can be seen in this table, one can achieve smaller expected frequency while still having zero explanation gap (VanillaGrad row). Thus, $\Delta\EF$ captures a continuous range of behaviors rather than enforcing a rigid distinction between post-hoc and ante-hoc explainability.}
\label{tab:positive-gap-act}
\end{table}

\begin{table}
\begin{center}
\begin{tabular}{l|l|c|c}
  & \multirow{2}{*}{Method} &\multicolumn{2}{|c}{$\downarrow\EF$ + $\downarrow\Delta \EF$}\\
  \cline{3-4}
  && ResNet50\cite{he_deep_2015} & ViT-B16\cite{wu_visual_2020}\\
\hline\hline
\multirow{7}{*}{\rotatebox[origin=c]{90}{ImageNet}}& VanillaGrad~\cite{simonyan_deep_2014} & $.263+\Delta .000$ & $.222+\Delta .000$\\
& SmoothGrad~\cite{smilkov_smoothgrad_2017} & $.247+\Delta.017$ & $.221+\Delta.001$\\
& IntGrad~\cite{sundararajan_axiomatic_2017} & $.253+\Delta.010$ & $.221+\Delta.001$\\
& GuidedBp~\cite{springenberg_striving_2015} & $.294+\Delta.031$ &  $.222+\Delta.000$\\
& DeepLift~\cite{shrikumar_not_2017} & $.254+\Delta.009$ & $.224+\Delta.003$\\
& GradCAM~\cite{selvaraju_grad-cam_2020} & $.133+\Delta.130$ & $.181+\Delta.041$\\
& LRP~\cite{binder_layer-wise_2016} & $.282+\Delta.019$ & Undefined\\
\hline
\end{tabular}
\end{center}
\caption{
\textbf{The Faithfulness-Complexity Trade-off in Other Architectures.}
This table presents the evaluation of the expected frequency from~\cref{eq:ef} and the explanation gap from~\cref{eq:delta-ef} across various post-hoc explanation methods for two pretrained architectures on ImageNet, scaled by $10^4$. As shown, GradCAM, a widely used method known for generating smoother explanations, consistently introduces the largest explanation gap across different architectures. In contrast, most classical saliency methods, originally designed for feedforward convolutional networks, exhibit lower variability when applied to vision transformers.
Also the activation function used in VIT is GELU which exhibits a high empirical similarity with SP (but an empirical comparison between RELU and GELU ViT, shows that the ViT architecture contributes to lower variability more than the activation used see \cref{tab:relu-vs-gelu-vit}).
}
\label{tab:positive-gap-arch}
\end{table}


Thus, the explanation gap is computed by first determining EF for both VanillaGrad and a given explanation method, then taking the difference. Furthermore, to demonstrate the broader applicability of our approach, we measure these quantities across different architectures, even in cases where our theoretical results may not strictly apply. The results are presented in~\cref{tab:positive-gap-act,tab:positive-gap-arch}.
\section{Implementation Details and Ablations}
\label{sec:implementation-details}
To accommodate both sharp and smooth ReLU variants within a Smooth Parameterization (SP), we used SoftPlus, an efficient approximation:
\begin{equation}
    \text{SoftPlus}(x;\beta) = \frac{1}{\beta} \ln(1+e^{\beta x})\approx \text{ReLU}*g_\beta(x)
\end{equation}
where $\beta$ controls smoothness, corresponding to the precision of the Gaussian convolution $g_\beta$.

To isolate ReLU’s effects, we used feedforward convolutional networks, avoiding interference of confounding factors like residual connections and batch normalization (\cref{sec:other-decisions}). Experiments were conducted across four datasets with varying input sizes (\cref{fig:psd-tail-inputsizes}).



Since ReLU networks typically converge faster than smooth parameterizations, we set dataset-specific validation accuracy caps for early stopping, preventing initialization biases (see~\cref{sec:other-decisions} for an ablation and~\cref{fig:cap-ablation-acc} for a comparison of validation accuracies achieved with each parameterization). This ensured meaningful comparisons under similar training budgets (see~\cref{tab:hparams} for hparams).


For ImageNet~\cite{deng_imagenet_2009}, pretrained networks were used to measure frequency and explanation gaps (\cref{tab:positive-gap-act,tab:positive-gap-arch}), leveraging the Captum library~\cite{kokhlikyan_captum_2020}.


We also conducted ablation studies on network depth and learning rate (\cref{sec:other-decisions}). Depth had little effect on spectral decay rates, while learning rate influenced curve shapes without altering overall tail behavior (\cref{fig:psd-tail-inputsizes}).


Notably, differences in the spatial power spectra of various models are robust and often observable from a single image. However, for improved reliability, we averaged decay rates across 1K images, using a consistent batch for all activations and depths within each dataset.
Furthermore, we have used normalized $\S$ in~\cref{eq:ef}, to have a distribution over frequencies, yet the values are comparable without normalization within the same architecture and dataset.

Given that Fourier analysis depends on the distribution of gradient magnitudes, we employed the inverse transformation method to normalize rankings for each pixel~\cite{greitans_spectral_2005,stoica_spectral_2005}. This ensured that results were independent of absolute gradient magnitudes, aligning with the intuition that explanation magnitudes can sometimes be uninformative and are better interpreted through rankings (see~\cref{sec:ranking-why-not}).

\section{Limitations and Future Work}
\label{sec:limitations}
We conjecture that expanding this research into a unified theoretical framework to analyze various design decisions would necessitate a more advanced mathematical theory of deep networks, an area that is currently lacking in the field~\cite{petersen_mathematical_2024}. 
Nonetheless, an comprehensive empirical analysis of the design choices can pave the way for a more advanced and encompassing theory. 

Given that this work relies on kernel methods for the connection between the tail of the network's power spectrum and the tail of the spatial power spectrum of input-gradient, caution is advised, particularly where the kernel perspective becomes counterintuitive, as seen in the results on depth~\cite{bietti_deep_2021}, or the assumption of infinite depth.
Nonetheless, it can be seen empirically, particularly in our work, that the conclusions hold in finite width networks. 

Our analysis assumes continuity in the spatial domain for analytical convenience. A suitable discretization of our framework is necessary for each resolution for insights into the tail behavior of networks in practice.

We believe this work lays the foundation for a more formal analysis and a systematic investigation of potential approaches, such as neural architecture search using the spectrum’s tail as an optimization objective.

Additionally, it presents a novel perspective on the traditional ante-hoc vs. post-hoc explainability dilemma, reframing them as two orthogonal axes ($\EF+\Delta\EF$) with a continuum of possible behaviors.


\section{Conclusion}
\label{sec:conclusion}
In this work, we introduced a unifying spectral framework to systematically analyze the complexity-faithfulness trade-off in gradient-based explainability. By leveraging Expected Frequency (EF) as a principled metric, we quantified a network’s reliance on high-frequency information and established a connection between network behavior and explanation complexity. Our findings demonstrate that explanation complexity and faithfulness are deeply intertwined through spectral properties, with surrogate models often introducing a significant ``explanation gap.''

Through theoretical analysis and empirical validation, we demonstrated that controlling the spectral properties of ReLU networks can lead to smoother explanations without compromising faithfulness. 
Our results offer practical implications for designing both neural networks and explainability methods that maintain interpretability with higher faithfulness to the original model. This study lays the foundation, summarized in~\cref{fig:graphical-view}, for future research in more formal approaches to explainability of neural networks with respect to their inherent explainability properties.
\section*{Acknowledgements}
This project is partially supported by Region Stockholm through MedTechLabs, and  \href{https://wasp-sweden.org/}{Wallenberg AI, Autonomous Systems and Software Program (WASP)} funded by the Knut and Alice Wallenberg Foundation. Scientific computation was enabled by the supercomputing resource Berzelius, provided by the National Supercomputer Center at Linköping University and the Knut and Alice Wallenberg foundation.

{
\small
\bibliographystyle{ieeenat_fullname}
\bibliography{main,references}
}
\newpage
\quad\\
\newpage
\appendix
\section{Explanation Methods are Low-pass Filters}
\label{sec:explanations-are-low-pass}
This section builds on key insights from studies focused on the spectral analysis of gradient-based explanation methods, which provides a foundation for understanding how these methods interact with different frequency components.

Gradient-based explanation methods often~\cite{adebayo_sanity_2020, haruki_gradient_2019} incorporate a perturbation mechanism to reduce noise in raw gradients, as seen in VanillaGrad. This perturbation can be represented as a probability distribution $p(x)$ in the input space of the neural network. For mathematical clarity, we focus on raw gradients, though the spectral properties of squared gradients are also covered in \cite{mehrpanah_spectral_2025}.

The process of sampling and averaging can be formulated as an expectation over the perturbation distribution:
\begin{equation}
    \E_{\mathcal{N}(x)}[\nabla f(x)]
\end{equation}

\begin{figure*}
    \centering

\begin{tikzpicture}[
    node distance=2cm,
    >=Stealth,
    every node/.style={align=center},
    box/.style={draw, rounded corners, minimum width=2cm},
    edge label/.style={
        fill=white,
        inner sep=2pt,
        font=\footnotesize,
        anchor=center,
    }
]

  \node (perturbation) {perturbation};
  \node (nabla) [below=of perturbation] {input-gradient};
  \node (relu) [below=of nabla] {Relu};
  \node (tpsf) [right=of nabla, box] {TPS f};
  \node (tspsnablaf) [right=of tpsf, box] {TSPS $\nabla$ f};
  \node (ef) [below=of tspsnablaf, box] {EF};
  \node (complexity) [below=of ef] {Complexity};
  \node (deltaef) [right=of ef, box] {$\Delta$ EF};
  \node (gap) [below=of deltaef] {Gap};

  \draw[->, bend left=30] (perturbation) to node[edge label] {\cite{mehrpanah_spectral_2025}} (tpsf);
  \draw[->] (nabla) to node[edge label] {\cite{mehrpanah_spectral_2025}} (tpsf);
  \draw[->, bend right=30] (relu) to node[edge label] {\cref{sec:controling-the-tps}~\cref{lemma:1}} (tpsf);
  \draw[->, bend left=60] (tpsf) to node[edge label] {\cref{sec:from-tps-to-tsps}~\cref{thm:1}} (tspsnablaf);
  \draw[->] (tspsnablaf) to node[edge label] {\cref{eq:ef}} (ef);
  \draw[->] (ef) to node[edge label] {\cref{sec:complexity-to-tsps}} (complexity);
  \draw[->, bend left=60] (ef) to node[edge label] {\cref{eq:delta-ef}} (deltaef);
  \draw[->] (deltaef) to node[edge label] {\cref{sec:quantify-the-gap}} (gap);

\end{tikzpicture}
    \caption{
    \textbf{Overview of Key Theoretical Connections.}
    A graphical overview of our contributions, which is based on prior works on spectral analysis of gradient-based explanation methods. The diagram illustrates the conceptual flow of the narrative presented in this paper. From TPS $f$ to $\EF$ and $\Delta\EF$, highlighting key theoretical results and their related sections for analyzing complexity and explanation gap in a unified theoretical framework.
    A follow-up for this work is finding out how other architectural components affect TSP of a network.}
    \label{fig:graphical-view}
\end{figure*}
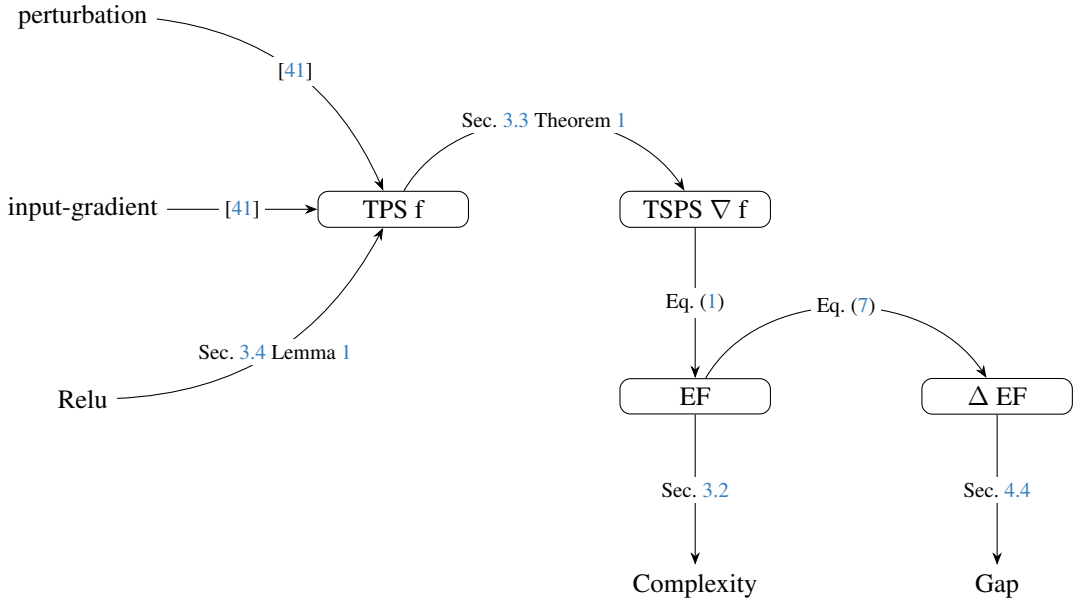

Using this formulation, we can derive a spectral representation of explanation methods: 
\begin{equation}
    \E_{\mathcal{N}(x)}[\nabla f(x)]\propto\int\omega \hf(\omega)\widehat{\mathcal{N}}(\omega)d\omega
\end{equation}
where $\hf(\omega)$ denotes the Fourier transform of the neural network, $\widehat{\mathcal{N}}(\omega)$ the Fourier transform of the perturbation distribution, and $\omega$ arises as a scaling factor due to the Fourier transform of the gradient.

This equation highlights the inherent filtering behavior of gradient-based explanation methods. The gradient operator acts as a high-pass filter, emphasizing high-frequency components of the model function, while the perturbation mechanism (\eg, Gaussian noise in SmoothGrad) serves as a low-pass filter, attenuating high-frequency components. The combined effect forms a band-pass filter, which selectively attributes importance to features within specific frequency ranges. This interplay between gradient computation and input perturbation fundamentally shapes the behavior of gradient-based explanations.

Given this behavior, we focus solely on the VanillaGrad in this work and disregard the variations in the neighborhoods, \ie assuming $\mathcal{N}(x)=\delta(x)$, in our theoretical analysis.
\section{An Empirical Study of Sharpness via Tail}
\label{sec:other-decisions}
In this work, we have primarily examined the impact of ReLU on the tail of the network's power spectrum in the main text, as it is a prevalent choice in many architectures used in contemporary computer vision. 
However, on the empirical side, our approach is not limited to this activation function, and similar analyses can be extended to other architectural choices. 
In this section, we empirically investigate the effects of other design decisions on the tail of the power spectrum.

\begin{figure}[t]
\centering
    \includegraphics[width=\linewidth]{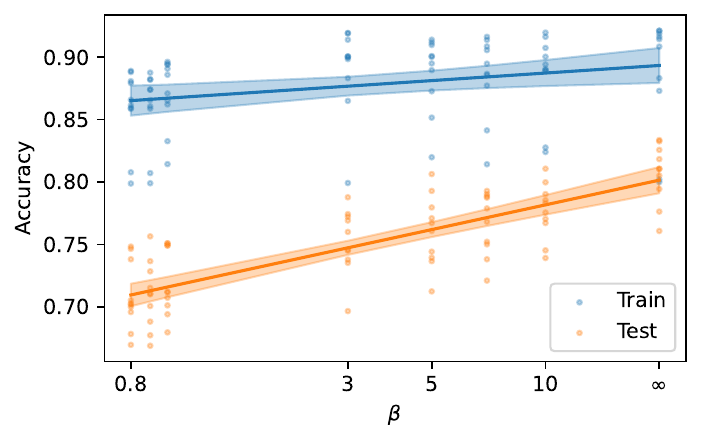}
   \caption{
   \textbf{Ablation Study: Impact of Smoothness Parameter on Validation Accuracy.}
   This figure presents an ablation study of our decision to impose an accuracy cap as an early stopping mechanism on Imagenette ($224 \times 224$). By relaxing this constraint, we train smooth parameterizations of ReLU networks with varying $\beta$ parameters (shown on the x-axis), where $\beta \rightarrow \infty$ corresponds to a standard ReLU network. As expected from the complexity-explainability tradeoff, restricting the network’s ability to learn high-frequency information results in a lower validation accuracy.}
\label{fig:cap-ablation-acc}
\end{figure}

To isolate the effect of the smooth parameterization of ReLU, we applied a validation accuracy cap to minimize the influence of initialization on our results. Since ReLU networks leverage well-established initialization schemes and generally achieve better convergence than our smooth parameterization, setting a high accuracy cap could introduce initialization as a confounding variable in our analysis. This approach allows for a meaningful comparison by ensuring networks are evaluated under similar training budgets.  

To investigate the impact of this validation accuracy cap, we conduct an ablation study where we remove the cap and train various networks with smooth ReLU parameterizations for approximately 200 epochs, continuing until their learning curves plateau.

The spatial power spectra for runs with different learning rates are shown in~\cref{fig:cap-ablation-tail}. Under these conditions, we can analyze the effect of the smoothness parameter $\beta$ on both training and test accuracy--see~\cref{fig:cap-ablation-acc}.

\begin{figure*}[t]
\centering
\begin{subfigure}[t]{0.24\linewidth}
    \centering
    \includegraphics[width=\linewidth]{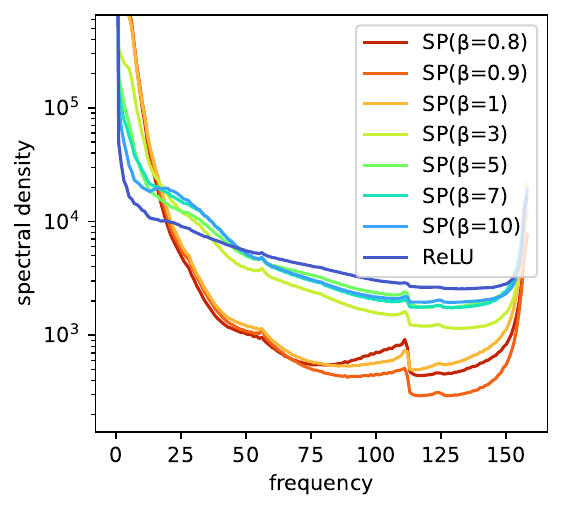}
    \caption{}
\end{subfigure}
\begin{subfigure}[t]{0.24\linewidth}
    \centering
    \includegraphics[width=\linewidth]{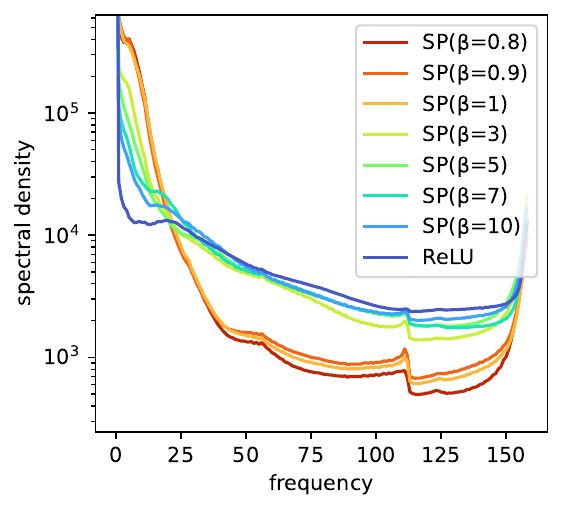}
    \caption{}
\end{subfigure}
\begin{subfigure}[t]{0.24\linewidth}
    \centering
    \includegraphics[width=\linewidth]{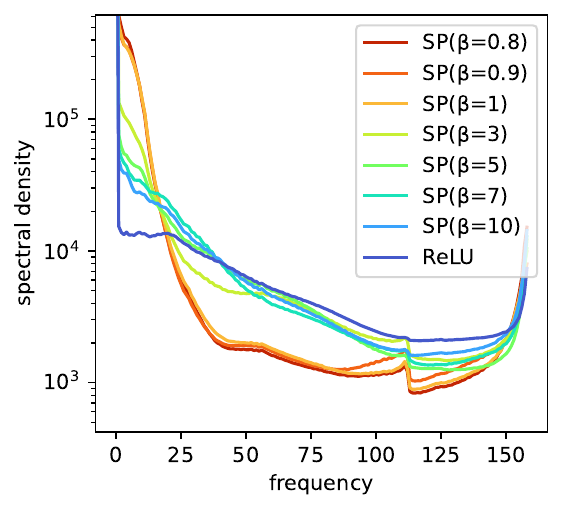}
    \caption{}
\end{subfigure}
\begin{subfigure}[t]{0.24\linewidth}
    \centering
    \includegraphics[width=\linewidth]{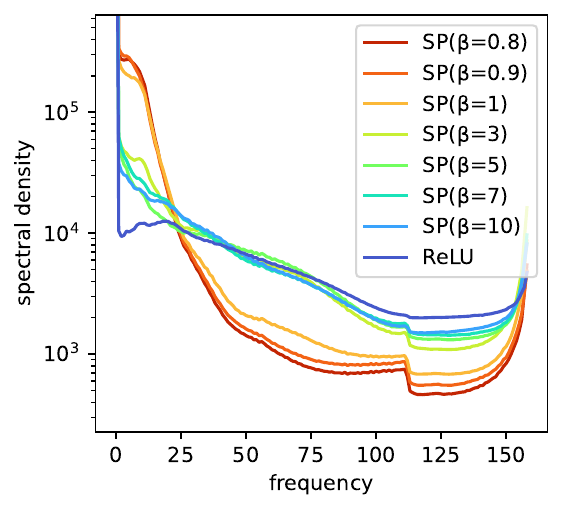}
    \caption{}
\end{subfigure}
   \caption{
   \textbf{Ablation Study: Impact of Validation Accuracy Cap and Learning Rate on the Spatial Power Spectrum.}
   This figure presents an ablation study on our decision to impose a validation accuracy cap as an early stopping mechanism on Imagenette ($224\times 224$) across different learning rates. To assess the impact of this choice, we train the networks for extended sessions of approximately 200 epochs without the accuracy cap. Important to note that, in this setting, the spatial power spectra of the functions do not correspond to networks with comparable functional behavior, as their performance levels differ--see~\cref{fig:cap-ablation-acc}. This discrepancy was the primary motivation for enforcing the accuracy cap.  
   Nevertheless, we observe that networks with higher smoothness parameter $\beta$, exhibit heavier tails in their spatial power spectra, indicating a greater tendency to learn higher-frequency information. However, this trend is less clear compared to the observations in~\cref{fig:psd-tail-inputsizes}.}
\label{fig:cap-ablation-tail}
\end{figure*}

Continuing our ablation study with learning rate and depth,~\cref{fig:psd-tail-depth} shows that the tail of the spatial power spectrum in a network with lower smoothness parameter $\beta$ contains less high-frequency content compared to a network with standard ReLU activations. This aligns well with earlier findings on NTK~\cite{bietti_deep_2021}, regarding the invariance of NTK with respect to depth.

\begin{figure*}
\begin{center}
\includegraphics[width=\linewidth]{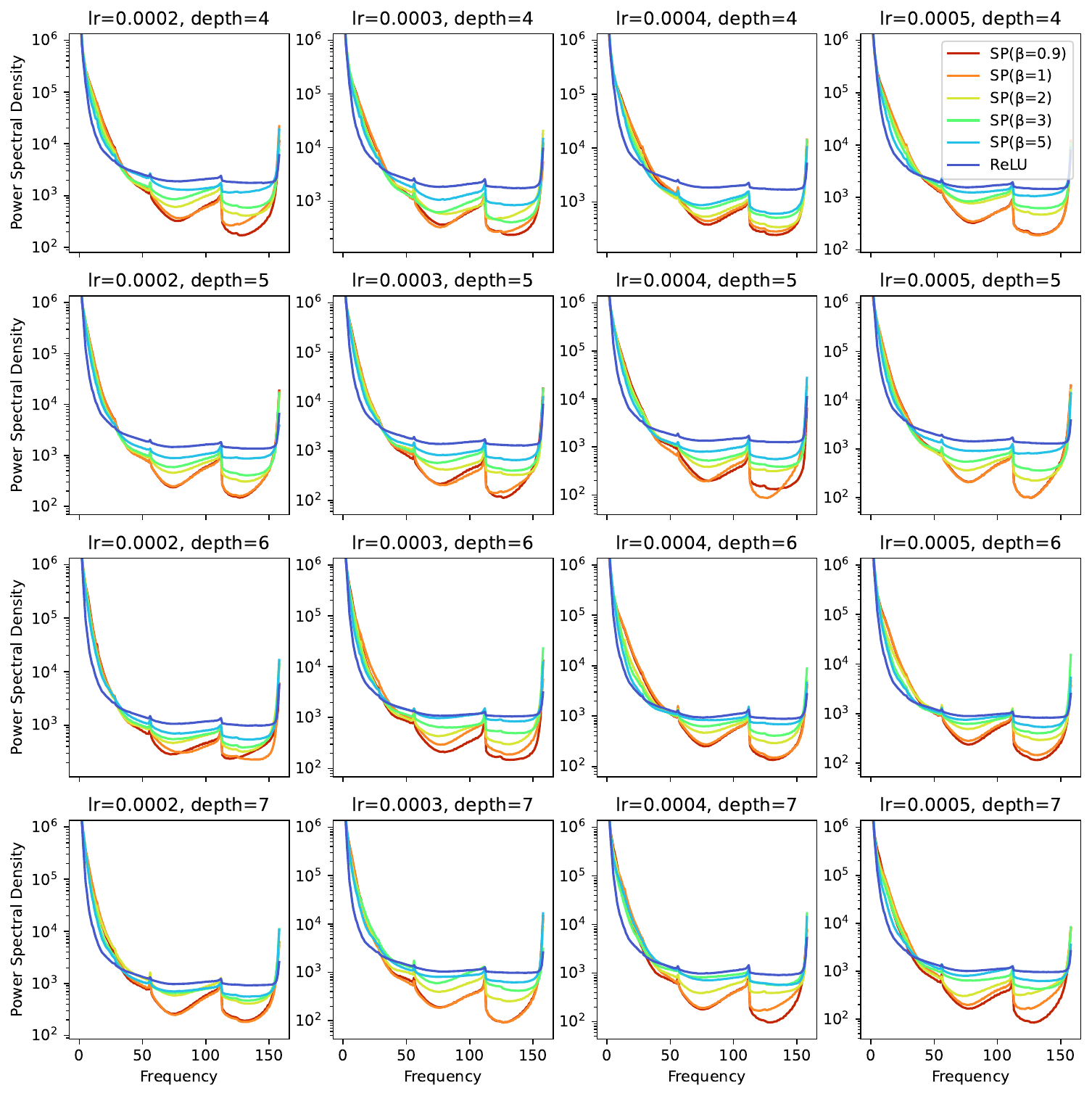}
\end{center}
   \caption{
   \textbf{Ablation Study: Impact of Depth and Learning Rate on the Spatial Power Spectrum.}
   This figure illustrates an ablation study on network depth (varied across rows) and learning rate (varied across columns) to assess their influence on the spatial power spectrum of the explanations. Empirically, the spatial power spectra remain largely unchanged with depth, aligning with earlier theoretical findings on NTK~\cite{bietti_deep_2021}. Additionally, the results support our theoretical framework, which predicts that increasing the smoothness parameter $\beta$, leads to heavier tails and, consequently, more complex explanations. We should note that the ReLU activation function is recovered whith $\beta\rightarrow\infty$, and all experiments were conducted on Imagenette $224\times224$.}
\label{fig:psd-tail-depth}
\end{figure*}

As previously discussed, input size plays a crucial role in our experiments and has been examined in prior studies by varying the dataset. To isolate the effect of input size alone, we trained models on different versions of the Imagenette dataset with input sizes of $224\times224$, $122\times122$, $64\times64$, and $46\times46$. The results are presented in ~\cref{fig:isize-ablation-tail}.

\begin{figure*}[t]
\centering
\begin{subfigure}[t]{0.24\linewidth}
    \centering
    \includegraphics[width=\linewidth]{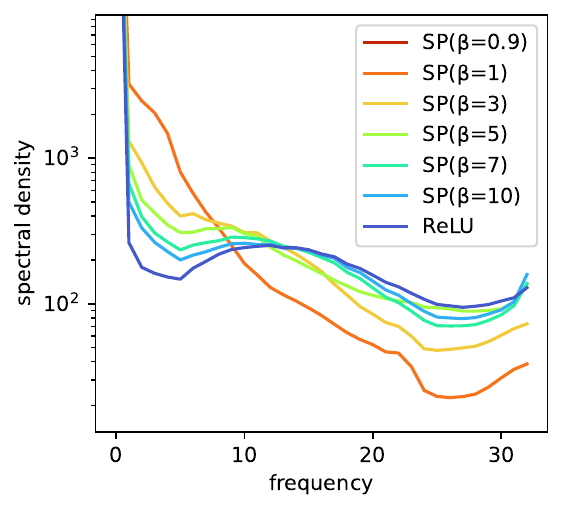}
    \caption{}
\end{subfigure}
\begin{subfigure}[t]{0.24\linewidth}
    \centering
    \includegraphics[width=\linewidth]{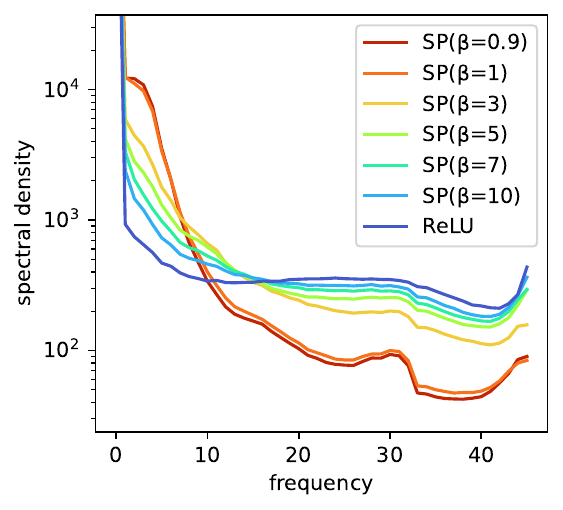}
    \caption{}
\end{subfigure}
\begin{subfigure}[t]{0.24\linewidth}
    \centering
    \includegraphics[width=\linewidth]{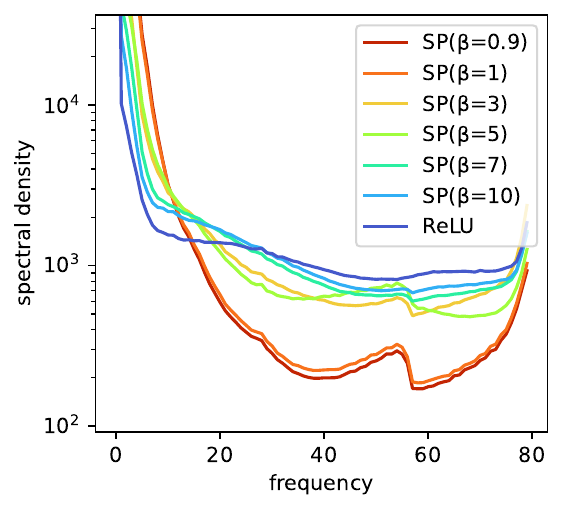}
    \caption{}
\end{subfigure}
\begin{subfigure}[t]{0.24\linewidth}
    \centering
    \includegraphics[width=\linewidth]{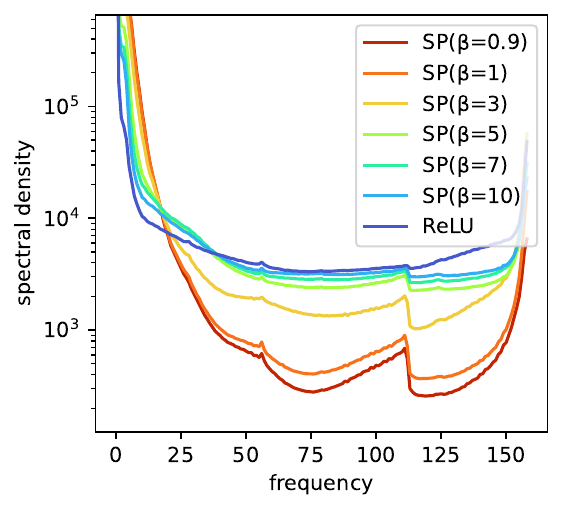}
    \caption{}
\end{subfigure}
   \caption{
   \textbf{Ablation Study: Impact of Input Size on the Spatial Power Spectrum.}
   This figure presents an ablation study, isolating the effect of input size by training multiple models on different versions of the Imagenette dataset with varying input resolutions. 
   The input sizes range from \textbf{(a)} 46$\times$46, \textbf{(b)} 64$\times$64, \textbf{(c)} 112$\times$112, and \textbf{(d)} 224$\times$224. 
   As can be observed, the overall trend remains consistent with previous findings in~\cref{fig:psd-tail-inputsizes}, though the peak in mid-range frequencies becomes less pronounced as the input size decreases. 
   This corroborates our conjecture about the correspondence of mid-frequency and very high frequency peaks to the reliance of the model on edges.}
\label{fig:isize-ablation-tail}
\end{figure*}

We have also examined the impact of skip connections and batch normalization on the tail of the spatial power spectrum, results shown in~\cref{fig:psd-tail-skbn}.
To be able to include skip connections, we have used slightly deeper networks.
While skip connections slightly affect the tail, batch normalization generally amplifies it significantly. 
Interestingly, this agrees with~\cite{cai_towards_2024} on the effect of batch normalization on learning high frequency information, yet suggesting a potential research direction to reconcile this observation with prior findings on skip connections mitigating gradient noise~\cite{balduzzi_shattered_2018}
In all cases, ReLU contributes to a heavier tail, whereas smoother versions reduce the expected frequency as defined in~\cref{eq:ef}.

\begin{figure}[t]
\centering
\begin{subfigure}[t]{0.7\linewidth}
    \centering
    \includegraphics[width=\linewidth]{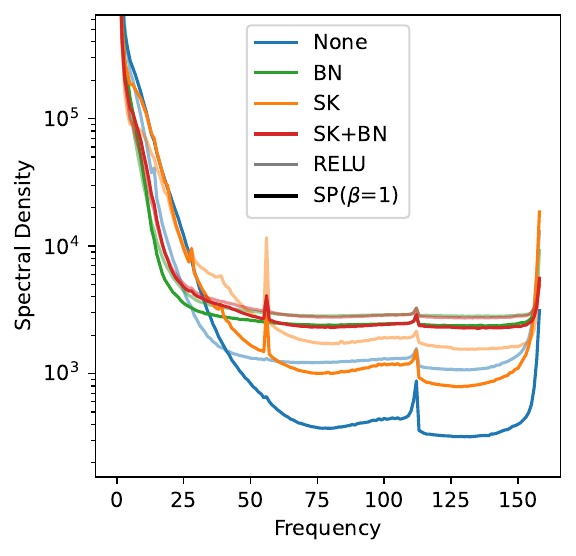}
    \caption{}
\end{subfigure}
\begin{subfigure}[t]{0.7\linewidth}
    \centering
    \includegraphics[width=\linewidth]{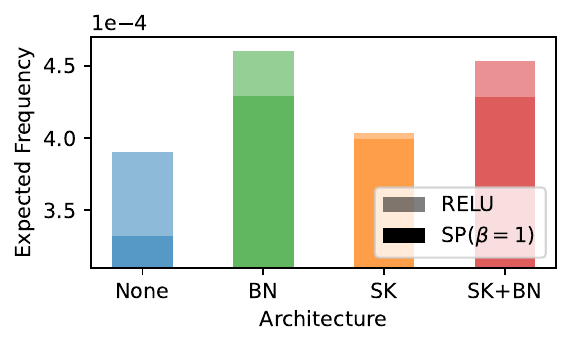}
    \caption{}
\end{subfigure}
   \caption{
   \textbf{Ablation Study: Effect of Skip Connections and Batch Normalization on the Spatial Power Spectrum Tail.}
   This figure illustrates the impact of skip connections and batch normalization on the tail of the spatial power spectrum on Imagenette ($224\times224$). In \textbf{(a)} the power spectrum's tail exhibits a slight increase in networks with skip connections. In contrast, batch normalization generally contributes to a heavier tail. Nonetheless, in all cases, ReLU tends to amplify the tail, whereas replacing it with a smoothed function with parameter $\beta=1$, reduces the expected frequency shown in \textbf{(b)}, as defined in~\cref{eq:ef}.}
\label{fig:psd-tail-skbn}
\end{figure}

We investigated the impact of input noise during training on the tail of the spatial power spectrum. Our observations indicate that Gaussian isotropic noise has a negligible effect, so we omitted the results to avoid redundancy.

\begin{table}
\begin{center}
\begin{tabular}{l|c|c|c|c}
\hline
Dataset & Input Size & LR & Depth & Cap\\
\hline\hline
Imagenette & 224$\times$224& $1e-4$& 5 & $60\%$ \\
Imagenette & 112$\times$112& $1e-4$& 5 & $60\%$\\
Imagenette & 64$\times$64& $3e-4$& 5 & $60\%$ \\
Imagenette & 46$\times$46& $5e-4$& 5 & $60\%$\\
CIFAR10 & 32$\times$32& $3e-3$& 4 & $70\%$\\
Fashion MNIST & 28$\times$28& $1e-4$& 3 & $80\%$\\
\hline
\end{tabular}
\end{center}
\caption{
\textbf{Table of hyperparameters.}
This table outlines the general hyperparameters used in our experiments analyzing the tail behavior of the power spectrum (TSPS) of gradient and its relation to that of the tail of the power spectrum (TPS) of the network. LR represents the learning rate, and Cap refers to an early stopping criterion based on validation accuracy. To isolate the effect of smooth parameterization of ReLU, we implemented the validation accuracy cap to reduce the impact of initialization on our findings. Since ReLU networks benefit from well-established initialization strategies and tend to exhibit better convergence properties compared to its smooth parameterization, a high accuracy cap could introduce initialization as a confounding factor to our analysis, see~\cref{sec:other-decisions} for an ablation study of this decision. This strategy ensures a meaningful comparison by comparing networks at similar training budgets.}
\label{tab:hparams}
\end{table}
\begin{table}
\begin{center}
\begin{tabular}{l|l|c|c}
  & \multirow{2}{*}{Method} &\multicolumn{2}{|c}{$\downarrow\EF$ + $\downarrow\Delta \EF$}\\
  \cline{3-4}
  && ReLU-ViT & GELU-ViT\\
\hline\hline
\multirow{7}{*}{\rotatebox[origin=c]{90}{Imagenette}}& VanillaGrad~\cite{simonyan_deep_2014} & $.239+\Delta .000$ & $.248+\Delta .000$\\
& SmoothGrad~\cite{smilkov_smoothgrad_2017} & $.239+\Delta.000$ & $.248+\Delta.000$\\
& IntGrad~\cite{sundararajan_axiomatic_2017} & $.244+\Delta.005$ & $.253+\Delta.005$\\
& GuidedBp~\cite{springenberg_striving_2015} & $.253+\Delta.000$ &  $.247+\Delta.014$\\
& DeepLift~\cite{shrikumar_not_2017} & $.245+\Delta.007$ & $.254+\Delta.006$\\
& GradCAM~\cite{selvaraju_grad-cam_2020} & $.205+\Delta.051$ & $.197+\Delta.033$\\
& LRP~\cite{binder_layer-wise_2016} & Undefined & Undefined\\
\hline
\end{tabular}
\end{center}
\caption{
\textbf{ReLU vs GELU in ViT-B16~\cite{wu_visual_2020}.}
This table reports the expected frequency (EF) from~\cref{eq:ef} and the explanation gap ($\Delta$EF) from~\cref{eq:delta-ef} across various post-hoc explanation methods for a ViT-B16 model trained from scratch on Imagenette, using different activation functions. All values are scaled by $10^4$.
The results indicate that the ViT architecture has a greater influence on lowering EF than the choice of activation function. 
Interestingly, the relative ordering of EF complexity between ReLU and GELU is inverted compared to theoretical expectations, which predict higher EF for ReLU. 
This discrepancy may stem from the fact that ViT induces a different kernel geometry~\cite{wright_transformers_2021} than the Laplace kernel assumed in our analysis~\cite{geifman_similarity_2020}.
Nonetheless, the activation function may still influence the smoothness (EF) and complexity ($\Delta$EF) of explanations.}
\label{tab:relu-vs-gelu-vit}
\end{table}

\section{A Short Introduction to Kernel Methods}
\label{sec:kernel-details}
A distinct line of research in machine learning aims to connect neural networks with the classical framework of kernel methods. This work introduced Neural Tangent Kernels (NTK)~\cite{jacot_neural_2020}, which help explain certain behaviors observed during neural network training. 
To ensure clarity, we briefly revisit the definition of kernels: a kernel is a symmetric function $k:\X\times\X\rightarrow\R$ such that the matrix $K_{ij}=k(x_i,x_j)$ is positive semidefinite for an arbitrary set $\X=\{x_1,\dots,x_n\}$. A kernel generally serves as a measure of similarity between two entities, such as inputs $x_i$ and $x_j$.

NTK provides a kernel-based perspective on neural networks, where similarity is defined in terms of weight gradients. Since our work in explainability relies on input gradients, we are particularly interested in the properties that NTK can reveal in this context. However, it is important to acknowledge that NTK was not originally designed for explainability, and purely theoretical predictions may be inaccurate. Therefore, experimental validation is crucial.

While a specialized version of NTK exists for convolutional networks—namely, the Convolutional Neural Tangent Kernel (CNTK)—we opted for a more general case, \ie NTK framework to support our observations. 
This choice is motivated by the fact that the reasoning in Lemma~\ref{lemma:1} relies on a core component of NTK, the $\tau$-transform. As demonstrated in our experiments, NTK provides a sufficiently accurate approximation for predicting the tail behavior of the power spectrum.

The discovery of NTKs represents a significant advancement in understanding neural networks. However, in this work, we primarily leverage results related to the sharpness of the kernel. Prior studies~\cite{geifman_similarity_2020} have identified a striking similarity between NTKs and the Laplace kernel, under certain technical conditions, see~\cite{geifman_similarity_2020}. Given that we are only interested in the tail behavior of the power spectrum, without loss of generality, we replace NTK with the Laplace kernel to simplify our analysis.

Our focus on the spectral decay properties of kernels induced by ReLU networks connects our work to studies on Reproducing Kernel Hilbert Spaces (RKHS), particularly NTK, the pre-activation tangent kernel (PTK), and related research~\cite{bietti_inductive_2019,daniely_toward_2017,geifman_similarity_2020,simon_reverse_2022,beaglehole_feature_2024} (see Appendices B and D in~\cite{beaglehole_feature_2024}). These connections suggest that insights in one domain may inform advancements in the other.

Additionally, our work is related to feature selection using kernels~\cite{allen_automatic_2013,chen_kernel_2018,gregorova_large-scale_2018,hou_kernel_2019}, highlighting the nuanced relationship between feature selection and explainability.

This section is based on key results from the kernel methods literature, particularly \cite{kanagawa_gaussian_2018}, which serves as a valuable resource for a deeper exploration. Kernel methods provide a powerful framework for non-parametric learning by implicitly mapping data into a high-dimensional feature space through a kernel function $k(x, x')$.

\setcounter{theorem}{0}
\begin{definition}
    A symmetric function $k:\X\times\X\rightarrow\R$, is called a positive semidefinite kernel, if the matrix $K_{ij}=k(x_i,x_j)$ is positive semidefinite for an arbitrary non-empty set $\X=\{x_1,\dots,x_n\}$.
\end{definition}

Common examples of such kernels include exponential functions of the form: $k(x, x') = \exp(-|x - x'|^\gamma)$, where the function is referred to as the Laplace kernel for $\gamma=1$ and the Gaussian kernel for $\gamma=2$.

A kernel, together with an associated inner product between functions, defines a Reproducing Kernel Hilbert Space (RKHS) $\mathcal{H}_k$, where functions inherit smoothness properties dictated by the choice of $k$. Notably, different kernels define RKHSs with varying smoothness constraints, and a key relationship between them is
\begin{equation}
    \mathcal{H}_{\text{Gaussian}}\subset \mathcal{H}_{\text{Laplace}}.
\end{equation}
This inclusion indicates that Gaussian RKHSs consist of smoother functions compared to those in the Laplace RKHS.

A fundamental result in kernel methods is the Representer theorem, which ensures that solutions to many learning problems can be expressed as kernel expansions: 
\begin{equation} 
f(x) = \sum_{i\in \I} \alpha_i k(x, x_i), 
\label{eq:representer}
\end{equation} 
where $x_i$ are training examples from the training set $\X$ indexed by $\I$, and $\alpha_i$ are learned coefficients.

A particularly important class of kernels, known as shift-invariant kernels, depends only on the absolute distance between inputs, \ie, $\Delta = \|x-x'\|$. 
With a slight abuse of notation, such kernels can be written as $k(\Delta)$. 
This property allows one to take the Fourier transform of the kernel with respect to $\Delta$, leading to a simplified yet insightful characterization of the RKHS:
\begin{equation} 
\mathcal{H}_k = \left\{ f : \int \frac{|\mathcal{F}\{f\}|^2}{\mathcal{F}\{k\}} d\omega < \infty \right\}. 
\end{equation} 
This expression relates the power spectrum of a function $f$ to the Fourier transform of the reproducing kernel $k$. Intuitively, a function $f$ belongs to the RKHS of $k$ if the decay rate of its power spectrum is at least as fast as that of $\mathcal{F}\{k\}$, thereby constraining the sharpness of functions representable by the kernel.

Recent developments in the kernel-based understanding of deep networks have led to the discovery of the Neural Tangent Kernel (NTK) \cite{bietti_inductive_2019}, which characterizes network behavior during training. 

The (empirical) NTK for a network $f$ with parameters $W^{\ell}$ at layer $\ell$ and two points $x_0$ and $z_0$ defined as follows,
\begin{equation}
    \hat{k}_{\ell}\left(x_0, z_0\right)=\left\langle\frac{\partial f\left(x_0\right)}{\partial W^{(\ell)}}, \frac{\partial f\left(z_0\right)}{\partial W^{(\ell)}}\right\rangle
\end{equation}
which is connected to pre-activation tangent kernel (PTK) $\mathcal{K}^{(\ell)}$ by noting that $\frac{\partial f\left(x_0\right)}{\partial W^{(\ell)}}=\frac{\partial f\left(x_0\right)}{\partial h_{\ell}} x_0^{\top}$. Therefore, we can write,
\begin{equation}
\hat{k}_{\ell}\left(x_0, z_0\right)=\mathcal{K}^{(\ell)}\left(x_0, z_0\right) \cdot x_{\ell}^{\top} z_{\ell} .    
\end{equation}
see Appendix B and D of~\cite{beaglehole_feature_2024} for details. 
Finally, since gradient-based explanation methods rely heavily on the network's input gradient, it is unsurprising that advancements in one domain can inform the other.

Notably, empirical and theoretical findings suggest that the NTK closely resembles the Laplace kernel~\cite{geifman_similarity_2020}, implying that the function space of neural networks is constrained similarly.
This insight provides a theoretical foundation for understanding the sharpness and expressivity of neural networks through a kernel lens.

Interestingly, despite being developed for different purposes, both NTK and gradient-based explanations rely on input gradients, suggesting that insights from one field can contribute to advancements in the other.

\section{CDF based Normalization}
\label{sec:ranking-why-not}

Our work is tangentially related to research on explainability, influenced by game-theoretic approaches to explanation~\cite{laberge_partial_2023,lundberg_unified_2017,wojtas_feature_2020}. We adopt the assumption that explanations can be represented as a ranking of input features.  

While the explainability community generally agrees that explanations can be expressed as rankings, the process of obtaining these rankings remains unclear. 
To derive rankings, existing pipelines incorporate various normalization strategies, ranging from simple techniques such as min-max normalization to more complex, heavily engineered approaches that are harder to reproduce.

Inspired by literature on spectral analysis of signals~\cite{greitans_spectral_2005,stoica_spectral_2005}, we assume that an explanation method gives rise to a distribution per image across pixels. 
More formally, if $x(i)$ denotes the random variable observed in each pixel of explanation, which is distributed according to $\pi(x(i))$, with $\Pi(x(i))$ being its corresponding cumulative distribution function.
To normalize the explanations, we use $\Pi(x(i))$ instead of the actual observed values, \ie $x(i)$.

To normalize this distribution within a comparable framework, we apply the inverse transformation method. This normalization technique aligns different distributions while preserving their spectral properties and remaining insensitive to magnitude, see~\cref{fig:vis-explainers-example}.

Compared to alternative normalization methods, such as norm or max normalization, our approach reveals clearer spectral-domain trends and is easier to reproduce, as it does not rely on extensive engineering.

While the inverse transformation method is useful in our setting, it is highly sensitive to small variations in the gradient. To mitigate this effect, we average the spectral densities over 1K samples, although meaningful results often emerge from a single image.

\section{Proofs and Technical Considerations}
\label{sec:proof-technical}
\begin{figure*}[t]
\centering
\begin{subfigure}[t]{0.24\linewidth}
    \centering
    \includegraphics[width=\linewidth]{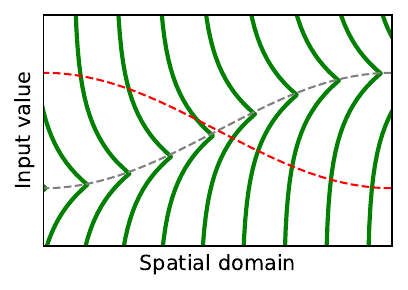}
    \caption{}
\end{subfigure}
\begin{subfigure}[t]{0.24\linewidth}
    \centering
    \includegraphics[width=\linewidth]{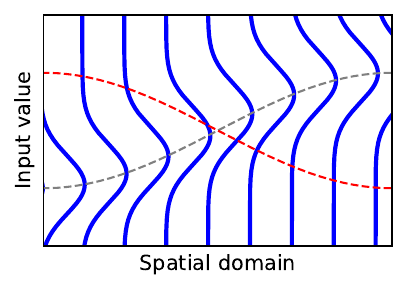}
    \caption{}
\end{subfigure}
\begin{subfigure}[t]{0.24\linewidth}
    \centering
    \includegraphics[width=\linewidth]{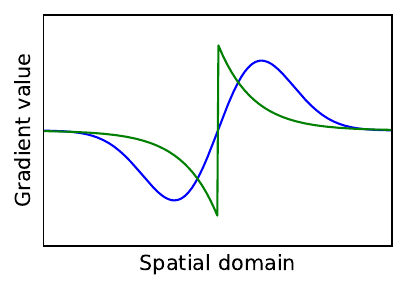}
    \caption{}
\end{subfigure}
\begin{subfigure}[t]{0.24\linewidth}
    \centering
    \includegraphics[width=\linewidth]{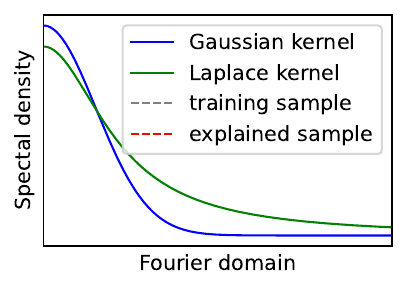}
    \caption{}
\end{subfigure}
   \caption{
   \textbf{Illustration of Kernel Sharpness on the Spatial Power Spectrum Tail.}
   A cartoon illustration depicts the theory regarding the effect of sharpness of the learned features on the tail of the power spectrum.
   \textbf{(a)} and \textbf{(b)}:
   The x-axis represents a simplified version of spatial dimensions (\eg, width or height), while the y-axis shows pixel values, consequently, each sample is visualized as a line. 
   Two samples are used for simplicity: the red line represents the sample to be explained, and the gray line represents a learned feature from the training data. 
   In (a), the features arise from the Laplace kernel (sharp), while in column (b), they arise from the Gaussian kernel (smooth).
   \textbf{(c)}:
   By taking the gradient of the classifier with respect to the input (along the y-axis in (a)) for the red line (the sample being explained), we get a function in the spatial dimension, which is visualized in (c).
   \textbf{(d)}:
   Applying the Fourier transform to the gradient values along the spatial dimension (x-axis of (c)) reveals different decay rates for the gradient functions, which are visualized in (d).
   This visualization highlights that the spectral properties of the gradient values for the sample being explained depend on the spectral properties of the kernel, as formalized in~\cref{thm:1}. 
   This visualization assumes high spatial autocorrelation between the learned features and the input, a characteristic typical of image data.
   For discussions on outcomes when this assumption is relaxed, refer to~\cref{sec:theoretically-unproven-assumptions}.
   While it is well established that the NTK's spectral properties closely resemble those of the Laplace kernel, here we use the Gaussian kernel purely as an illustrative example and do not explicitly characterize the kernel corresponding to a smoother variant of ReLU.}
\label{fig:kernel-shape-and-sharpness}
\end{figure*}

In this section, we emphasize that the tail behavior is a relatively stable property, meaning it does not change easily. While the proof involves a tedious case analysis to rule out various edge cases, it remains conceptually straightforward.

Let $k$ be a kernel equivalent to the Neural Tangent Kernel (NTK) of a network, and $\mathcal{I}$ is an index for our training set $\X$. We express our function in terms of its features as
\begin{equation}
    f(x)=\sum_{i\in\I}\alpha_i k(x,x_i)\nonumber
\end{equation}
where, same as~\cref{eq:representer}, $\X$ represents the training set.
Since, in explainability, we are interested in the spectral decay of the function’s input-gradient, we consider
\begin{equation}
\nabla_x f(x) = \sum_{i\in\I} \alpha_i\nabla_x k(x,x_i),
\end{equation}
however, we may drop the subscript $x$ from $\nabla_x$ as we only take the input-gradient.

\begin{lemma}
Let \(\mathcal{X} = \{x_i\}_{i=1}^{n} \subset \mathbb{R}^d\) be a dataset of size \(n\), and let \(k: \mathbb{R}^d \times \mathbb{R}^d \to \mathbb{R}\) be a shift-invariant kernel with Fourier transform \(\hat{k}(\omega)\). Define the input-gradient of the kernel as \(\nabla_x k(x, x')\). Then, the spectral decay of the input-gradient of \(k\) satisfies the bound:  
\[
\left|\mathcal{F}\{\nabla_x k\}\right|^2 = \mathcal{O}(n\,\omega^2 |\hat{k}(\omega)|^2),
\]
where \(\mathcal{F}\) denotes the Fourier transform, aligned with the direction of the gradient, and \(\omega\) is the frequency variable in the spectral domain.
    \label{lemma:2}
\end{lemma}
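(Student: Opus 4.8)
The plan is to bound the power spectrum of the gradient $\nabla_x f = \sum_{i\in\I}\alpha_i\nabla_x k(x,x_i)$ by reducing it to the power spectrum of the kernel itself. First I would note that, since $k$ is shift-invariant, each term $k(x,x_i)$ is a translate of a single function $k(\Delta)$ with $\Delta = x-x_i$. The gradient operator $\nabla_x$ acting along the gradient direction corresponds, under the Fourier transform $\F$ aligned with that direction, to multiplication by (a constant multiple of) $\omega$; thus $\F\{\nabla_x k(\cdot,x_i)\}(\omega) = i\omega\,\hk(\omega)\,\mathrm{e}^{-i\omega x_i}$, where the phase factor $\mathrm{e}^{-i\omega x_i}$ is the only $i$-dependence and has unit modulus. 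This is the key structural observation: translation only changes the phase, not the magnitude, so $|\F\{\nabla_x k(\cdot,x_i)\}(\omega)|^2 = \omega^2|\hk(\omega)|^2$ identically in $i$.

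Next I would assemble the sum. By linearity of the Fourier transform, $\F\{\nabla_x f\}(\omega) = i\omega\,\hk(\omega)\sum_{i\in\I}\alpha_i\,\mathrm{e}^{-i\omega x_i}$, so that
\begin{equation}
\left|\F\{\nabla_x f\}(\omega)\right|^2 = \omega^2|\hk(\omega)|^2\Bigl|\sum_{i\in\I}\alpha_i\,\mathrm{e}^{-i\omega x_i}\Bigr|^2.\nonumber
\end{equation}
The remaining task is to control the trigonometric-polynomial factor $|\sum_i \alpha_i\mathrm{e}^{-i\omega x_i}|^2$. Expanding the square gives a double sum $\sum_{i,j}\alpha_i\alpha_j\,\mathrm{e}^{-i\omega(x_i-x_j)}$, whose modulus is at most $\sum_{i,j}|\alpha_i||\alpha_j| = (\sum_i|\alpha_i|)^2$. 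With the $n$ coefficients bounded by a constant (the learned $\alpha_i$ being fixed, data-independent constants for the purpose of the asymptotic bound), Cauchy--Schwarz yields $(\sum_i|\alpha_i|)^2 \le n\sum_i|\alpha_i|^2 = \mathcal{O}(n)$. Substituting, $|\F\{\nabla_x k\}|^2 = \mathcal{O}(n\,\omega^2|\hk(\omega)|^2)$, which is the claimed bound.

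I expect the main obstacle to be making precise what ``the Fourier transform aligned with the direction of the gradient'' means, and verifying that the gradient-to-$\omega$ multiplication rule survives the restriction to that one-dimensional direction; this is the place where the high spatial autocorrelation assumption of \cref{thm:1} enters, justifying treating the gradient spectrum along a single axis rather than as a full $d$-dimensional transform. The second delicate point is the uniformity of the coefficient bound: strictly the $\alpha_i$ depend on $\X$ through the Representer theorem, so the $\mathcal{O}(n)$ claim presumes the $\alpha_i$ are uniformly bounded (or that the interest is purely in the $n$-scaling and $\omega$-decay, absorbing coefficient magnitudes into the constant). Everything else—linearity, the phase-only effect of translation, and the triangle/Cauchy--Schwarz estimate—is routine, consistent with the excerpt's own remark that the argument is conceptually straightforward but involves ruling out edge cases.
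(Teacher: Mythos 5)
Your proposal is correct and follows essentially the same route as the paper's own proof: expand the gradient via the representer sum, use linearity and the differentiation-to-$i\omega$ rule of the Fourier transform, exploit shift-invariance so that each translate contributes only a unit-modulus phase, and absorb the coefficient sum into an $\mathcal{O}(n)$ factor. If anything, your explicit Cauchy--Schwarz step $\bigl|\sum_i \alpha_i e^{-i\omega x_i}\bigr|^2 \le n\sum_i \alpha_i^2$ is more carefully justified than the paper's corresponding inequality, which as written swaps the square and the sum without the requisite factor of $n$, though the final bound agrees.
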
 
\begin{proof}
To establish an upper bound on the tail behavior of the gradient of the kernel, we write
\begin{align}
    \left|\F\{\nabla f(x)\}\right|^2&=\left|\F\left\{\sum_{x_i\in\I} \alpha_i \nabla k(x,x_i)\right\}\right|^2\\
    \label{eq:order-sum-hk}&\propto\omega^2\left|\sum_{i\in\I} \alpha_i\F\{k(x,x_i)\}\right|^2\\
    &\leq\omega^2\sum_{i\in\I} \alpha_i^2\left|\F\{k(x,x_i)\}\right|^2\\
    &\leq\omega^2|\hk(\omega)|^2 \sum_{i\in\I} \alpha_i^2\\
    &\in\mathcal{O}\left(n\,\omega^2 |\hk(\omega)|^2\right)
    \label{eq:order-sample-size-hk}
\end{align}
\end{proof}

\begin{remark}
As can be seen, the bound $\mathcal{O}\left(n\omega^2 |\hk(\omega)|^2\right)$ for input-gradient of the kernel scales with the dataset size. Yet, we are not interested in the effect of dataset size on the problem and compare models trained on fixed datasets. Hence, For simplicity of exposition, hereafter we assume $n=1$.
\label{remark:n=1}
\end{remark}

According to~\cref{remark:n=1}, we expect the spectral decay of the kernel gradient to depend primarily on the Fourier transform of the kernel itself, specifically the $|\hk(\omega)|^2$ term in~\cref{eq:order-sample-size-hk}.

\begin{remark}
Writing the function with Representer theorem, after~\cref{remark:n=1}, we have $f(x)=\alpha_1k(x,x_1)$, where $x_1$ is a training sample. For clarity without loss of generality, hereafter, we let $\alpha_1=1$ and denote our single training sample with $x_t$.
\end{remark}

We denote the spatial dimension by $\tau$, along which we are to use a Fourier basis. Hence, assuming the training sample is a continuous function along the spatial dimension, we denote it by  $x_t(\tau)$.

Let $x_e$ denote the sample for which we seek a gradient-based explanation. Let $x_e'(\tau)$ denote the gradient of the kernel \wrt the spatial dimension $\tau$:
\begin{equation}
    x_e'(\tau) = \nabla_\tau k(x_e(\tau),x_t(\tau))
\end{equation}

\begin{remark}
We assume the input data exhibits a high degree of spatial input feature correlation both for $x_t(\tau)$ and $x_e(\tau)$. This can be simply expressed by a high concentration of spatial power spectrum of the input $|\widehat{x}(\tau)|^2$ around zero. Therefore, we assume there is a $\tau_0$ such that the following condition holds for all $\tau$ and $\tau'$:
\begin{equation}
     |\tau|<\tau_0 \text{ and } |\tau'|>\tau_0 \rightarrow |\widehat{x}(\tau)|^2\gg |\widehat{x}(\tau')|^2
\end{equation}
\label{remark:high-autocorr}
\end{remark}
This assumption is common for image data, where neighboring pixels tend to be highly correlated, see~\cref{sec:theoretically-unproven-assumptions} for further discussion.


Furthermore, a condition about $x_t$ and $x_e$, is the existence of an intersection at a certain point in spatial domain.
Let $\Delta(\tau) \coloneqq x_t(\tau)-x_e(\tau)$, then we can express this condition compactly as:
\begin{equation} 
\exists \tau^* \quad \text{such that} \quad \Delta(\tau^*) = 0,
\label{eq:condition-for-tail}
\end{equation}
As we will show in the next lemma, this intersection influences the behavior of the kernel's spectral decay. 

\begin{lemma}
    Let \( k: \mathbb{R}^d \times \mathbb{R}^d \to \mathbb{R} \) be a shift-invariant kernel with spatial Fourier transform \( \widehat{k}(\omega) \), and let \( \nabla_x k(x, x') \) denote its input-gradient. Suppose there exist trajectories \( x_t(\tau) \) and \( x_e(\tau) \), with high autocorrelation, as stated in~\cref{remark:high-autocorr}. 
    The asymptotic decay rate of the spatial power spectrum of the gradient kernel is primarily determined by the intersection condition stated in~\cref{eq:condition-for-tail}.
    \label{lemma:3}
\end{lemma}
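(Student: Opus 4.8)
The plan is to translate the claim about the asymptotic decay of the spatial power spectrum into a statement about the \emph{local regularity} of the gradient kernel, and then to show that this regularity is controlled entirely by the crossing points. Invoking \cref{remark:n=1}, we work with a single training sample, so the object of interest is the scalar function $g(\tau) \coloneqq x_e'(\tau) = \nabla k(x_e(\tau),x_t(\tau))$ of the spatial variable $\tau$, and the quantity to analyze is the tail of $|\F\{g\}(\omega)|^2$. The guiding principle is the classical dictionary between smoothness and Fourier decay: a function that is smooth on all of $\R$ has a transform decaying faster than any polynomial, whereas an isolated singularity of a fixed order forces a matching algebraic tail, with the least smooth singularity dictating the asymptotics. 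Thus it suffices to locate the singularities of $g$ and read off their order.

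First I would establish that $g$ is smooth away from the intersection set. Writing the shift-invariant kernel as $k(x_e,x_t)=\kappa(\Delta)$ with $\Delta(\tau)=x_t(\tau)-x_e(\tau)$, the input-gradient is $\kappa'(\Delta(\tau))$ times smooth factors. By \cref{remark:high-autocorr}, both trajectories have spatial power concentrated near zero frequency, so $x_t$, $x_e$, and hence $\Delta$ are smooth and slowly varying and contribute only low-frequency content. The kernels of interest---in particular the Laplace kernel, to which the NTK is spectrally equivalent---are analytic in $\Delta$ except at $\Delta=0$, where $\kappa'$ jumps. Consequently $g$ is smooth at every $\tau$ with $\Delta(\tau)\neq 0$, and its only singularities sit exactly at the crossings $\tau^*$ satisfying \cref{eq:condition-for-tail}.

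Next I would localize and linearize. Using a smooth cutoff $\chi$ supported near $\tau^*$, split $g=g\chi+g(1-\chi)$; the second term is smooth, so its transform is negligible in the tail, and multiplication by $\chi$ does not alter the decay class of the first. On the support of $\chi$, a transversal crossing gives $\Delta(\tau)\approx\Delta'(\tau^*)(\tau-\tau^*)$, so $g\chi$ is, up to smooth modulation, a rescaled copy of the kernel's singular profile $\kappa'$. Taking the spatial Fourier transform then yields a tail proportional to $|\F\{\kappa'\}(\omega)|^2\propto\omega^2|\hk(\omega)|^2$, which both matches the upper bound of \cref{lemma:2} and shows it is \emph{attained} precisely because of the crossing---sharp kernels (heavy $\hk$) producing heavy gradient tails and smooth kernels producing light ones.

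The main obstacle is making this localization rigorous and exhaustive, which is the tedious case analysis alluded to above. One must confirm that convolving with the transform of the smooth factor and of $\chi$ (both rapidly decaying) cannot change the tail class, and, more delicately, handle the degenerate configurations excluded by transversality: tangential crossings with $\Delta'(\tau^*)=0$, multiple or accumulating intersections, and crossings occurring on scales comparable to the autocorrelation length $\tau_0$ of \cref{remark:high-autocorr}. Each such case must be shown either to yield a strictly faster-decaying contribution---hence dominated by a generic transversal crossing---or to reduce to the linear model after a higher-order expansion of $\Delta$. Establishing that none of these produces a heavier tail than the transversal crossing is the crux, after which the claim that the intersection condition determines the decay rate follows.
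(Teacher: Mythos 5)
Your proposal is correct and follows essentially the same route as the paper's proof: both localize at a transversal crossing $\tau^*$, linearize $\Delta(\tau)\approx\alpha(\tau-\tau^*)$, and use the resulting rescaling together with the Fourier derivative rule to obtain the $\omega^2|\hk(\omega)|^2$ tail, with multiple or degenerate intersections relegated to a case analysis that does not change the decay order. The only difference is presentational: you frame the argument through the smoothness--decay dictionary and an explicit cutoff localization, which makes the ``smooth away from crossings'' step more visible than in the paper but does not alter the substance.
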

\begin{proof}  
We want to take the Fourier transform (with respect to the spatial variable \(\tau\)) of the derivative  
\[
x_e'(\tau) = \frac{d}{dx} k\bigl(x_e(\tau), x_t(\tau)\bigr),
\]
where, by the shift-invariance of \(k\), we may express  
\[
\frac{d}{dx} k\bigl(x_e(\tau), x_t(\tau)\bigr) = \frac{d}{dx} k\bigl(x_e(\tau)-x_t(\tau)\bigr).
\]
To make the Fourier analysis tractable, we approximate \(\Delta(\tau)\) by a linear function in the spatial domain. Two cases arise:
\begin{enumerate}
    \item \textbf{No Intersection:} If there is no \(\tau\) for which \(\Delta(\tau)=0\), then a linear approximation yields \(\Delta(\tau)=\alpha\) with \(\alpha~\neq~0\). In this degenerate case, no root is present.
    
    \item \textbf{Intersection(s) Exist:} There exists at least one \(\tau^*\) satisfying  
   \[
   \Delta(\tau^*) = 0.
   \]
   In a neighborhood of such a point, by a first-order Taylor expansion,  
   \[
   \Delta(\tau) \approx \alpha (\tau-\tau^*),
   \]
   for some \(\alpha\neq0\) and \(\tau\) close to \(\tau^*\). This approximation ensures that the linearized \(\Delta\) has a root at \(\tau=\tau^*\), capturing the sharp transition in the kernel.
\end{enumerate}
In the typical setting with many training samples, it is reasonable to assume that such intersections exits, therefore we focus on case 2.
According to the definition of $\Delta$, we have
\[
\frac{d}{dx} k\bigl(x_e(\tau)-x_t(\tau)\bigr) = \frac{d}{dx} k\bigl(\Delta(\tau)\bigr).
\]
Under the linearization of $\Delta$, we can use a local change of variables \(x = \alpha \tau\) (which, is only valid locally due to the linear approximation), we can write
\[
\frac{d}{dx} k\bigl(-\Delta(\tau)\bigr) = \frac{1}{\alpha}\frac{d}{d\tau} k\bigl(\alpha(\tau-\tau^*)\bigr).
\]

Taking the Fourier transform with respect to \(\tau\) then yields
\[
\F_\tau\left\{x_e'(\tau)\right\} = \frac{1}{\alpha}\F_\tau\left\{\frac{d}{d\tau} k\bigl(\alpha(\tau-\tau^*)\bigr)\right\}.
\]
By the standard property of the Fourier transform, namely that differentiation corresponds to multiplication by \(i\omega\), it follows that
\[
\F_\tau\left\{\frac{d}{d\tau} k\bigl(\alpha(\tau-\tau^*)\bigr)\right\} = i\omega\,\F_\tau\left\{ k\bigl(\alpha(\tau-\tau^*)\bigr)\right\}.
\]
Thus, taking the magnitude squared, to compute the power spectrum, we obtain
\[
\left|\F_\tau\left\{\frac{d}{d\tau} k\bigl(\alpha(\tau-\tau^*)\bigr)\right\}\right|^2 = \omega^2 \left|\F_\tau\left\{ k\bigl(\alpha(\tau-\tau^*)\bigr)\right\}\right|^2.
\]
Hence, 
\[
\left|\F_\tau\left\{x_e'(\tau)\right\}\right|^2 = \frac{1}{\alpha^2}\,\omega^2 \left|\F_\tau\left\{ k\bigl(\alpha(\tau-\tau^*)\bigr)\right\}\right|^2.
\]
Due to the translation invariance of the Fourier transform, the shift by \(\tau^*\) does not alter the decay properties, so that
\[
\left|\F_\tau\left\{ k\bigl(\alpha(\tau-\tau^*)\bigr)\right\}\right|^2 = \left|\F_\tau\left\{ k\bigl(\alpha\tau\bigr)\right\}\right|^2.
\]
Recalling that the spatial Fourier transform of \(k\) is \(\hat{k}(\omega)\), we deduce
\[
\left|\F_\tau\left\{x_e'(\tau)\right\}\right|^2 = \frac{\omega^2}{\alpha^2}\left|\hat{k}(\omega)\right|^2,
\]
or, equivalently,
\[
\left|\F_\tau\left\{x_e'(\tau)\right\}\right|^2 \in \mathcal{O}\Bigl(\omega^2\,\hat{k}(\omega)^2\Bigr).
\]

Finally, while one might consider the possibility of multiple intersections (i.e., multiple neighborhoods where \(\Delta(\tau)\) changes sign), these contribute only as a multiplicative factor in the intermediate expressions (analogous to summing over intersections) and do not affect the order of decay rate. Therefore, the presence of at least one intersection governs the tail behavior of the spatial power spectrum of the gradient kernel. 
\end{proof}

Thus, we have demonstrated that the gradient of the samples can be approximated using local linear projections of the gradient of the kernel into the spatial domain. Consequently, the sharp transitions in the spatial domain are a direct consequence of sharp transitions in the gradient of the kernel. Since the tail behavior of the power spectrum depends only on the existence of such sharp transitions and not on their number, we considered a single intersection for a single sample in our analysis.

We now present the proof of Theorem 1 under the assumption that the training and explanation trajectories intersect. 
This assumption is made primarily for theoretical convenience.
In practice, factors beyond our theoretical model—such as random initialization—can induce sharp transitions at arbitrary locations. 
Consequently, the absence of such intersections is highly improbable when the explained sample lies within the support of the training data distribution. 
This also aligns with previous findings that sharp transitions induced by ReLU breakpoints, which introduce nonlinearity, occur not only on the training data~\cite{novak_sensitivity_2018}, but also in surrounding regions.


\begin{theorem}
    Let \(\mathcal{X}\subset\mathbb{R}^d\) be a fixed dataset and let \(f:\mathbb{R}^d\to\mathbb{R}\) be a neural network whose associated Neural Tangent Kernel (NTK) is denoted by \(K^{\text{(NTK)}}(c)\). Then, the asymptotic decay of the power spectrum of \(K^{\text{(NTK)}}(c)\) is directly proportional to the asymptotic decay of the power spectrum of the spatial Fourier transform of \(\nabla f(x)\).
    \label{thm:ntk-spectrum}
\end{theorem}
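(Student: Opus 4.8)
The plan is to assemble the final theorem directly from the two lemmas already proved in this section, since \cref{thm:ntk-spectrum} is essentially the culmination of a chain rather than a fresh computation. First I would invoke the NTK-as-kernel viewpoint from \cref{sec:kernel-details}: the trained network $f$ admits a representer expansion $f(x)=\sum_{i\in\I}\alpha_i k(x,x_i)$ with $k=K^{\text{(NTK)}}$, so that its input-gradient is $\nabla f(x)=\sum_{i\in\I}\alpha_i\nabla_x k(x,x_i)$. The target is a proportionality between the asymptotic decay of $|\hk(\omega)|^2$ (the power spectrum of the NTK, viewed as a shift-invariant kernel $K^{\text{(NTK)}}(c)$ in the distance variable $c$) and the decay of the spatial power spectrum $|\F_\tau\{\nabla f\}|^2=\S_{\nabla f}(\omega)$.

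The core of the argument is to combine \cref{lemma:2} and \cref{lemma:3}. \cref{lemma:2} supplies the upper bound $|\F\{\nabla f\}|^2=\mathcal{O}(n\,\omega^2|\hk(\omega)|^2)$; after \cref{remark:n=1} we reduce to $n=1$ and a single training sample $x_t$, giving the one-sided estimate $\S_{\nabla f}(\omega)=\mathcal{O}(\omega^2|\hk(\omega)|^2)$. \cref{lemma:3} then supplies the matching behavior in the other direction: under the high spatial autocorrelation assumption (\cref{remark:high-autocorr}) and the intersection condition \cref{eq:condition-for-tail}, the local linearization $\Delta(\tau)\approx\alpha(\tau-\tau^*)$ yields the exact local identity $|\F_\tau\{x_e'(\tau)\}|^2=\tfrac{\omega^2}{\alpha^2}|\hk(\omega)|^2$. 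The two together pin the decay rate from above and below by the same order $\omega^2|\hk(\omega)|^2$, and since the extra factor $\omega^2$ is common to both sides it does not alter the \emph{relative} tail comparison between the network's spectrum and its gradient's spatial spectrum — hence the claimed direct proportionality of decay rates.

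Concretely I would structure the proof as: (1) write the representer expansion and reduce to a single sample via \cref{remark:n=1}; (2) cite \cref{lemma:3} to obtain the local proportionality $\S_{\nabla f}(\omega)\propto\omega^2|\hk(\omega)|^2$ near an intersection, justifying that a single intersection governs the tail (multiple intersections contribute only a multiplicative constant); and (3) conclude that because the map $|\hk(\omega)|^2\mapsto\omega^2|\hk(\omega)|^2$ preserves the ordering of tail decay rates, the asymptotic decay of the NTK power spectrum and that of $\S_{\nabla f}$ are directly proportional. I would finish by remarking that this is a statement about \emph{tail} behavior only, echoing the caveat after \cref{thm:1} that the correspondence need not hold near zero frequency.

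The main obstacle, and the step I would treat most carefully, is the passage from the \emph{local} linear-approximation identity of \cref{lemma:3} to a genuine \emph{asymptotic} (large-$\omega$) statement about the tail. The Taylor expansion $\Delta(\tau)\approx\alpha(\tau-\tau^*)$ is valid only in a neighborhood of $\tau^*$, yet the high-frequency tail of the Fourier transform is precisely what is controlled by the sharpest local feature of the signal — so I would argue, via the standard principle that the asymptotic decay of $\hf(\omega)$ is dictated by the worst singularity of $f$ (here the kink in $k(\Delta(\tau))$ induced by the intersection), that the localized contribution dominates the tail and the smooth remainder contributes faster-decaying terms. Making this rigorous would require either a stationary-phase / localization argument or an explicit bound showing the complementary region decays strictly faster; absent that, the theorem remains at the level of a proportionality of leading-order decay rates, which is all the subsequent use of \cref{thm:1} requires.
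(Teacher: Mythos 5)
Your proposal follows essentially the same route as the paper's proof: invoke the representer expansion, reduce to a single sample via \cref{remark:n=1}, apply \cref{lemma:2} for the upper bound and \cref{lemma:3} for the local identity $\frac{\omega^2}{\alpha^2}|\hk(\omega)|^2$ at an intersection, and conclude the proportionality of tail decay rates. Your added discussion of why the localized kink dominates the large-$\omega$ asymptotics is a point the paper leaves implicit, but it does not change the structure of the argument.
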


\begin{proof}
    By Lemma~\ref{lemma:2}, the spectral decay of the input-gradient \(\nabla f(x)\) of a shift-invariant kernel is characterized by
    \[
    \left|\mathcal{F}\{\nabla f\}\right|^2 = \mathcal{O}(n\,\omega^2|\hat{k}(\omega)|^2),
    \]
    when focusing on a single training sample and a corresponding explanation instance we realize that $n$ would be a constant.

    Next, under the high-autocorrelation assumption and the existence of an intersection between the training and explanation trajectories (as specified in~\cref{eq:condition-for-tail}), Lemma~\ref{lemma:3} shows that the tail behavior of the spatial power spectrum of the gradient kernel is governed by the local behavior at this intersection. In this region, a linear approximation of the difference \(\Delta(\tau)\) is valid, and the induced local (approximate) change of variables
    \[
    x = \alpha\, \tau
    \]
    (with \(\alpha\neq 0\)) enables us to relate the derivative with respect to \(x\) to that with respect to \(\tau\) via \(\frac{d}{dx} = \frac{1}{\alpha}\frac{d}{d\tau}\).

    Consequently, the Fourier transform of the gradient undergoes the transformation
    \begin{align}
    \mathcal{F}_\tau\left\{\frac{d}{dx} k(\cdot)\right\} &= \frac{1}{\alpha} \, i\omega\, \mathcal{F}_\tau\left\{ k(\alpha(\tau-c)) \right\}\\
        &=\frac{\omega^2}{\alpha^2}\left|\hat{k}(\omega)\right|^2
    \end{align}
    which implies that the power spectrum is asymptotically proportional to
    \[
    \mathcal{F}_\tau\left\{\frac{d}{dx} k(\cdot)\right\}\in \mathcal{O}\Bigl(\omega^2\,|\hat{k}(\omega)|^2\Bigr).
    \]
    
    Thus, the asymptotic decay of the power spectrum of \(K^{\text{(NTK)}}(c)\) is directly proportional to that of the spatial Fourier transform of \(\nabla f(x)\), as claimed.
\end{proof}


\paragraph{Derivations for \cref{eq:delta-ef}}
\label{sec:derivations-delta-ef}
In this section, we compute the two integrals related to the Neural Tangent Kernel (NTK) connections, under the assumption that the underlying kernel is the Laplace kernel~\cite{geifman_similarity_2020}. Since the domain under consideration is finite, we evaluate the integrals over a bounded interval $(l,h)$.

Our goal is to establish the relation $\mathcal{G}(f,\widehat{f})\sim\Delta \EF$, by analyzing the asymptotic behavior of both quantities with respect to the kernel variance parameter $b$, as it gets modified according to~\cref{lemma:1}.

From \cref{thm:1}, we know that the quantity $\S_{e_{f}}$ in \cref{eq:ef} is asymptotically equivalent to the power spectral density of the Laplace kernel:

\begin{equation}
    |\widehat{k}(\omega)|^2=\frac{2b}{1+b^2\omega^2}
\end{equation}
Moreover, \cref{lemma:2} shows that $\mathcal{G}(f,\widehat{f})$ in \cref{eq:delta-ef} has the same leading-order behavior.

Substituting into the integrals and analyzing their asymptotics yields
\begin{equation}
    \mathcal{G}(f,\widehat{f})\sim \Delta \EF \sim \mathcal{O}\left(\frac{1}{b}\right),
\end{equation}
as $b\rightarrow\infty$, and
\begin{equation}
    \mathcal{G}(f,\widehat{f})\sim \Delta \EF \sim \mathcal{O}(b),
\end{equation}
as $b\rightarrow 0^+$, thus confirming the scaling relation of interest.

We note that alternative derivations are possible, but we chose the most direct approach, which also ensures that the two orthogonal components $(\EF+\Delta\EF)$ share consistent units.

\section{Contribution of ReLU to NTK's Sharpness}
\label{sec:relu-vs-Softplus}

The literature of NTK is somewhat denser in terms of the results around ReLU networks, and as far as the authors are concerned, SoftPlus has not been considered as an option when analyzing properties of NTK.
Here, we introduce a creative technique to bypass tedious steps for finding analytical solutions to for NTK, assuming that we have some initial results, which is based of convolution operation defined as.
\begin{equation}
    f*g(u) = \int_{v\in\R} f(u-v)g(v) dv
\end{equation}
We focus on a work that provides a simple introduction into the equations needed to be solved for the ReLU NTK.

Summarizing one of the results in~\cite{simon_reverse_2022} for 1 hidden-layer neural networks with initialization weights and bias variances $\sigma_w^2=1$ , $\sigma^2_b=0$, we have to compute the $\tau$-transform, defined as:
\begin{equation}
    \tau_\phi\left(c ; p\right) = \mathbb{E}_{z_1, z_2 \sim p(z_1, z_2;c,\sigma^2)}\left[\phi\left(z_1\right) \phi\left(z_2\right)\right]
\end{equation}
where $\phi$ denotes the activation function, and 
\begin{equation}
    p(z_1,z_2;c,\sigma^2) = \mathcal{N}\left(\left[\begin{array}{l}
0 \\
0
\end{array}\right],\left[\begin{array}{cc}
\sigma^2 & c \sigma^2 \\
c \sigma^2 & \sigma^2
\end{array}\right]\right).
\end{equation}

After computing the $\tau$-transform, we can compute the NTK using the following recursive equations:
\begin{equation}
K^{(\mathrm{NTK})}(c) = K^{(0)}(c) +cK^{(1)}(c)
\end{equation}
where $K^{(0)}(c)$ and $K^{(1)}(c)$ are defined as follows
\begin{align}
K^{(0)}(c) & = \tau_\phi\left(c ; p\right) \\
K^{(1)}(c) & = \tau_{\phi^{\prime}}\left(c ; p\right)
\end{align}
where $\tau_{\phi^{\prime}}$ is defined as follows
$$
\tau_{\phi^{\prime}}\left(c ; p\right)=\frac{\partial_c}{\sigma^2} \tau_\phi\left(c ; p\right)
$$

\setcounter{lemma}{0}
\begin{lemma}
Let \(\phi: \mathbb{R} \to \mathbb{R}\) be an activation function, and let \(K^{(\text{NTK})}(c)\) denote the Neural Tangent Kernel (NTK) associated with \(\phi\). 
Define \(\phi_{\beta} = \phi * g_{\beta}\) as the activation function obtained by convolving \(\phi\) with a Gaussian function \(g_{\beta}(x) = \sqrt{\frac{\beta}{2\pi}} e^{-x^2\beta/2}\) of precision \(\beta\). 
Then, the NTK corresponding to \(\phi_{\beta}\), denoted as \(K^{(\text{NTK})}_{\beta}(c)\), leads to a smoother function in the sense that it exhibits faster decay, compared to \(K^{(\text{NTK})}(c)\).
\end{lemma}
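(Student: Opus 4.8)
The plan is to exploit a clean reformulation of the Gaussian convolution as an injection of independent Gaussian noise into the pre-activations, which turns the smoothing of $\phi$ into a deterministic contraction of the correlation argument of the $\tau$-transform. Since $g_\beta$ is the density of $\mathcal{N}(0,1/\beta)$ and is symmetric, $\phi_\beta(z)=(\phi*g_\beta)(z)=\E_{u\sim\mathcal{N}(0,1/\beta)}[\phi(z+u)]$. Substituting this into the definition of the $\tau$-transform and using Fubini to push the two independent noises $u_1,u_2$ inside the expectation, the pair $(z_1+u_1,\,z_2+u_2)$ is again jointly Gaussian with inflated variance $\tilde\sigma^2=\sigma^2+1/\beta$ and unchanged covariance $c\sigma^2$. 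Hence I would first establish the identity
\begin{equation}
\tau_{\phi_\beta}(c;\sigma^2)=\tau_{\phi}(\tilde c;\tilde\sigma^2),\qquad \tilde c=\gamma\,c,\quad \gamma=\frac{\sigma^2}{\sigma^2+1/\beta}\in(0,1).\nonumber
\end{equation}
The essential consequence is that smoothing evaluates the original $\tau$-transform only on the strictly contracted correlation range $[-\gamma,\gamma]\subsetneq[-1,1]$, and that $\gamma\to1$ as $\beta\to\infty$, recovering the sharp baseline.

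Next I would propagate this contraction through the NTK recursion $K^{(\mathrm{NTK})}(c)=K^{(0)}(c)+cK^{(1)}(c)$ with $K^{(0)}=\tau_\phi$ and $K^{(1)}=\tau_{\phi'}$. Because convolution commutes with differentiation, $(\phi_\beta)'=\phi'*g_\beta=(\phi')_\beta$, so the identical noise-injection argument applies verbatim to the derivative term, giving $\tau_{(\phi_\beta)'}(c;\sigma^2)=\tau_{\phi'}(\tilde c;\tilde\sigma^2)$; equivalently one may differentiate the displayed identity using $\tau_{\phi'}=\partial_c\tau_\phi/\sigma^2$ and merely track the chain-rule factor $\gamma$. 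Assembling the pieces expresses $K^{(\mathrm{NTK})}_{\beta}(c)$ entirely through the original kernel components evaluated at $\tilde c=\gamma c$, i.e. as a reparameterization of $K^{(\mathrm{NTK})}$ whose argument never reaches the coincidence endpoints $\pm1$.

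The regularity payoff follows from the observation that, for the activations of interest (ReLU and its relatives), the $\tau$-transform and the resulting arc-cosine/NTK kernel are analytic on the open interval $(-1,1)$ and lose regularity only at $c=\pm1$ (the familiar $\arccos$/$\sqrt{1-c^2}$-type non-analyticities arising when the two pre-activations become perfectly correlated). Since the contraction confines the argument to $[-\gamma,\gamma]$ with $\gamma<1$, the smoothed kernel $K^{(\mathrm{NTK})}_{\beta}$ inherits analyticity on the whole of $[-1,1]$: the singularity is effectively relocated to $c=\pm1/\gamma$, outside the admissible range. I would then invoke the standard correspondence — the same principle underlying the inclusion $\mathcal{H}_{\text{Gaussian}}\subset\mathcal{H}_{\text{Laplace}}$ recalled earlier — that improved regularity of a kernel at the coincidence point forces a faster-decaying power spectrum $|\hk(\omega)|^2$, which is precisely the claimed smoother kernel with faster decay.

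The main obstacle I anticipate is making this final link rigorous: translating ``regularity of $K^{(\mathrm{NTK})}$ in the correlation $c$ at $c=\pm1$'' into a quantitative statement about the decay rate of the spatial power spectrum $|\hk(\omega)|^2$ entering Lemmas~\ref{lemma:2}--\ref{lemma:3}. This requires relating regularity at the coincident-input limit $\Delta\to0$ to the high-frequency tail, for instance through a Tauberian/Abelian argument or a Mercer eigenvalue-decay estimate on the sphere, and it is exactly the step where the proof sketch elects to argue ``without explicit characterization of the kernel.'' A secondary subtlety is handling the mild variance inflation $\tilde\sigma^2=\sigma^2+1/\beta$ consistently (e.g. by normalizing to unit-variance pre-activations) so that the comparison isolates the contraction of $c$, and checking that differentiating the $\tau$-transform introduces no endpoint singularity of its own for the non-smooth baseline $\phi$.
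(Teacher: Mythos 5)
Your proposal is correct and follows essentially the same route as the paper: your noise-injection identity $\tau_{\phi_\beta}(c;\sigma^2)=\tau_{\phi}(\tilde c;\sigma^2+1/\beta)$ is exactly the paper's change-of-variables computation that produces the inflated-covariance base measure $q(\boldsymbol{\kappa})$, and you handle the derivative term identically via $(\phi*g_\beta)'=\phi'*g_\beta$. Your reformulation as a contraction of the correlation argument away from the endpoints $c=\pm1$, and your explicit flagging of the remaining gap (turning endpoint regularity into a quantitative power-spectrum decay rate), are if anything more careful than the paper, which stops at ``covariance closer to identity, hence a smoother kernel'' and leaves that final step equally informal.
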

\begin{proof}
We can simply start with the definition of $\tau$-transform and the convolution of a Gaussian function $g$ with ReLU $\phi$, as follows
\begin{align}
    \tau_{\phi_{\beta}}\left(c ; p\right)&=\tau_{\relu*g_{\beta}}\left(c ; p\right)\\
    &= \iint \phi_{\beta}(z_1)\phi_{\beta}(z_2)p(z_1,z_2) dz_1 dz_2\\
    &= \iint \relu*g_{\beta}(z_2) \relu*g_{\beta} (z_1)\nonumber\\
    &p(z_1,z_2) dz_1 dz_2\\
    &= \iiiint \relu(z_2-\nu_2) \relu(z_1-\nu_1)\nonumber\\
    &g_{\beta} (\nu_1)g_{\beta}(\nu_2) p(z_1,z_2) d\nu_1d\nu_2dz_1 dz_2\\
    &= \iiiint \relu(\kappa_1) \relu(\kappa_2) g_{\beta}(\nu_2) g_{\beta} (\nu_1)\nonumber\\ 
    &p(\kappa_1+\nu_1,\kappa_2+\nu_2) d\kappa_1d\kappa_2d\nu_1d\nu_2\\
    &= \iint \relu(\kappa_1) \relu(\kappa_2) \nonumber\\ 
    &\left(\iint g_{\beta}(\nu_2) g_{\beta} (\nu_1)p(\kappa_1+\nu_1,\kappa_2+\nu_2)d\nu_1d\nu_2\right)\nonumber\\ &d\kappa_1d\kappa_2\\
    &= \iint \relu(\kappa_1) \relu(\kappa_2) q(\kappa_1,\kappa_2)d\kappa_1d\kappa_2\\
    &= \tau_{\relu}\left(c ; q\right)
\end{align}
This shows that the convolution of ReLU with a Gaussian function changes the covariance matrix of the $\tau$-transform to:
\begin{equation}
    q(\boldsymbol{\kappa}) = \mathcal{N}\left(\left[\begin{array}{l}
0 \\
0
\end{array}\right],\left[\begin{array}{cc}
\sigma^2+\beta & c \sigma^2 \\
c \sigma^2 & \sigma^2 +\beta
\end{array}\right]\right).
\end{equation}
As the range of $c\in[-1,1]$ is fixed, increasing $\beta$ would lead to a matrix closer to identity, hence a smoother kernel.

We would like to conclude the proof by highlighting the fact that
\begin{equation}
    \frac{d}{dz}\left(\relu*g_{\beta}(z)\right) = \relu'*g_{\beta}(z)
\end{equation}
where $\relu' = \frac{d}{dz}\relu$. Therefore, the $\tau$-transform applied to the second term of the kernel, \ie $K^{(1)}$, would lead to the same derivations, with a consistent replacement of $\phi\rightarrow\phi'$.
\end{proof}

It is important to note that in practice, we approximate the convolution of Gaussian and ReLU with SoftPlus activation function. That is
\begin{equation}
    \phi_{\beta}(x,\beta) = \relu*g_{\beta}(x)\approx\text{SoftPlus}(x;\beta),
\end{equation}
with a proper choice of precision $\beta$, which is directly proportional to the parameter of SoftPlus. 

We have also verified the statement in Lemma~\ref{lemma:1} experimentally in~\cref{fig:smoother-kernel-gaussian-conv}, using the method in~\cite{murray_characterizing_2023} for SoftPlus activation function.
\begin{figure}[t]
\centering
   \includegraphics[width=\linewidth]{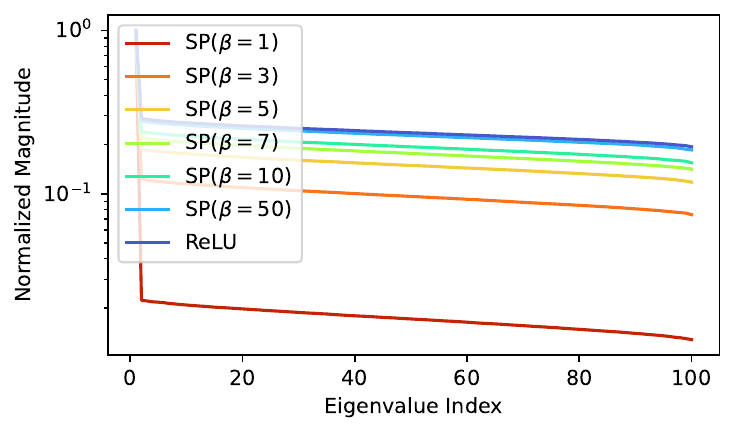}
   \caption{
   \textbf{Spectral Decay of the Empirical NTK Across Smooth Parameterizations of ReLU.}
   This figure depicts the spectral decay of the empirical neural tangent kernel NTK, plotting the eigenvalues (x-axis) against their normalized magnitudes (y-axis). This empirical evaluation supports Lemma~\ref{lemma:1}, demonstrating how the choice of activation function influences the spectral tail behavior. 
   Notably, ReLU exhibits the heaviest tail, while increasing the Softplus $\beta$ parameter, shown by SP($\beta$), results in a sharper decay. 
   For further insights, compare these results with~\cref{fig:psd-tail-inputsizes} where there is a very similar progression in the tail of the spatial power spectrum. This figure is produced by the code available in \cite{murray_characterizing_2023}.}
\label{fig:smoother-kernel-gaussian-conv}
\end{figure}

\section{Regarding Input's Spatial Autocorrelation}
\label{sec:theoretically-unproven-assumptions}

\begin{figure*}[t]
\centering
\begin{subfigure}[t]{0.24\linewidth}
    \centering
    \includegraphics[width=\linewidth]{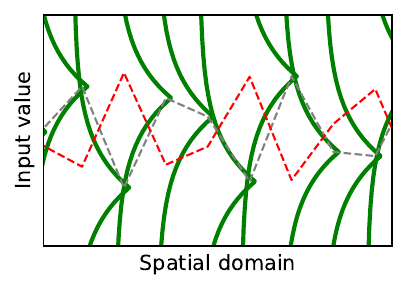}
    \caption{}
\end{subfigure}
\begin{subfigure}[t]{0.24\linewidth}
    \centering
    \includegraphics[width=\linewidth]{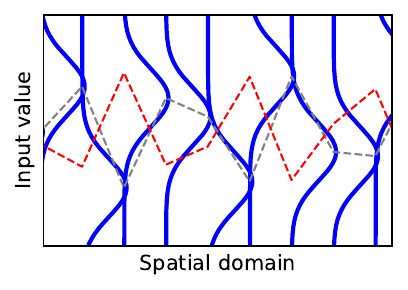}
    \caption{}
\end{subfigure}
\begin{subfigure}[t]{0.24\linewidth}
    \centering
    \includegraphics[width=\linewidth]{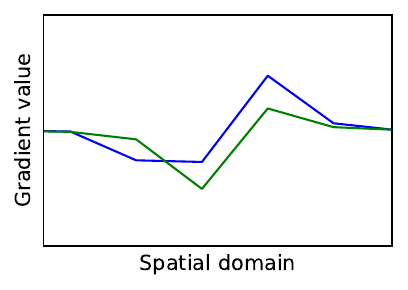}
    \caption{}
\end{subfigure}
\begin{subfigure}[t]{0.24\linewidth}
    \centering
    \includegraphics[width=\linewidth]{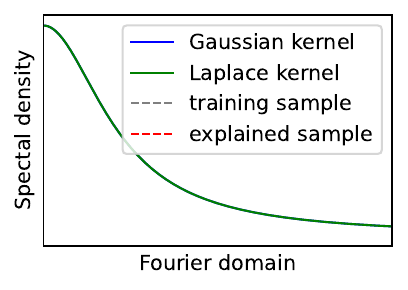}
    \caption{}
\end{subfigure}
   \caption{
   \textbf{Impact of Low Spatial Autocorrelation on Spatial Power Spectrum of Gradient.}
   This figure illustrates the impact of low spatial autocorrelation in the input and, consequently, in the learned features on the tail behavior of the spatial spectral density of the gradient. Compare with~\cref{fig:kernel-shape-and-sharpness}.
\textbf{(a)} and \textbf{(b)}: As in~\cref{fig:kernel-shape-and-sharpness}, the x-axis represents spatial dimensions, while the y-axis represents pixel values. 
However, unlike~\cref{fig:kernel-shape-and-sharpness}, where the input and learned features exhibit high spatial autocorrelation, here both display low spatial autocorrelation. 
This phenomenon can be observed in various data modalities, such as data frames. 
As a result, the relationship between input samples—specifically, the sample to be explained (red line) and the training sample (gray line)—appears more irregular.
\textbf{(c)}: Taking the gradient of the classifier with respect to the input yields a function in the spatial domain. Compared to~\cref{fig:kernel-shape-and-sharpness}, the gradient function here is less structured, reflecting the reduced spatial autocorrelation.
\textbf{(d)}: In this case, applying the Fourier transform to the gradient values may not reveal distinct spectral decay rates for the Laplace and Gaussian kernels, as the tail behavior is also influenced by the input autocorrelation.
}
\label{fig:low-feature-autocorrelation}
\end{figure*}

There are important confounding factors in our analysis, regarding the autocorrelation of the learned features and that of the input features of the data.
If the input data has high spatial autocorrelation, then we naturally expect a high autocorrelation in the learned features  after training.

Even though being intuitive, specially in image data, this phenomenon has not been theoretically proven yet.

Unfortunately, proving this property is outside the scope of explainability and is more towards the literature around kernel methods or feature alignment.
This is important as we assume continuity and high spatial autocorrelation in our analysis for recovering the behavior of the tail of the power spectrum, without a formal proof.

We are aware that such assumption would rule out the existence and influence of sharp changes in the image, such as edges.

For a visualization of low feature autocorrelation that might correspond to a network after initialization, or a network trained on a data with low spatial autocorrelation, compare~\cref{fig:kernel-shape-and-sharpness,fig:low-feature-autocorrelation}.
\setlength{\bsize}{0.11\textwidth}
\begin{figure*}[t]
\centering
\begin{subfigure}[t]{\bsize}
    \centering
    \includegraphics[width=\linewidth]{content/banner/404_image.pdf}
\end{subfigure}
\begin{subfigure}[t]{\bsize}
    \centering
    \includegraphics[width=\linewidth]{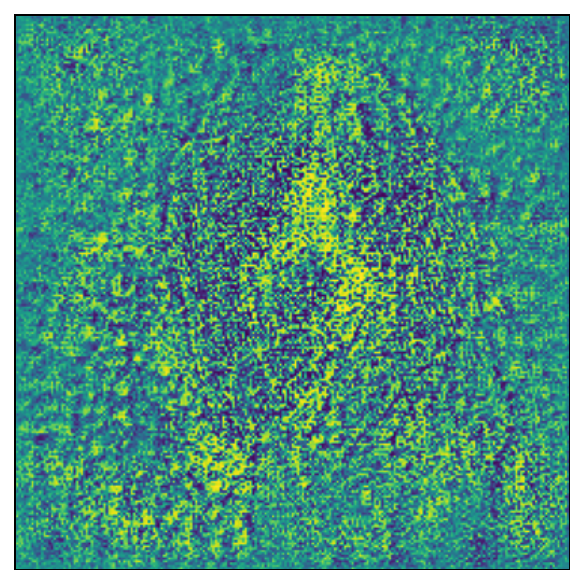}
\end{subfigure}
\begin{subfigure}[t]{\bsize}
    \centering
    \includegraphics[width=\linewidth]{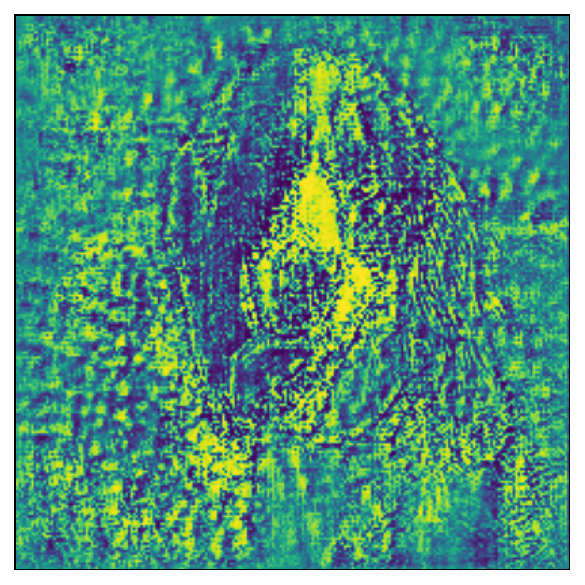}
\end{subfigure}
\begin{subfigure}[t]{\bsize}
    \centering
    \includegraphics[width=\linewidth]{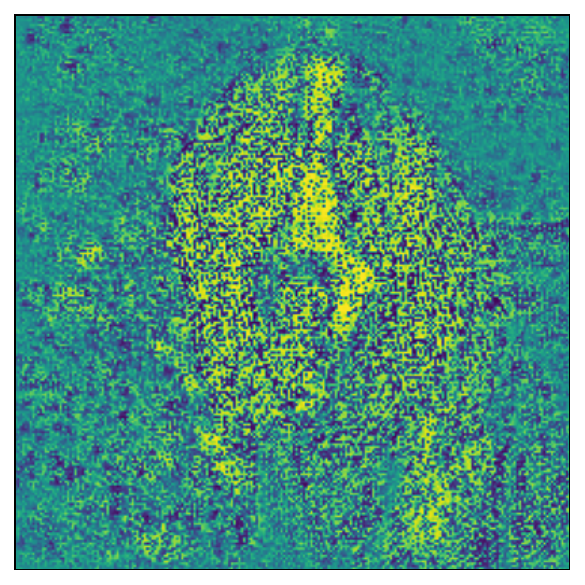}
\end{subfigure}
\begin{subfigure}[t]{\bsize}
    \centering
    \includegraphics[width=\linewidth]{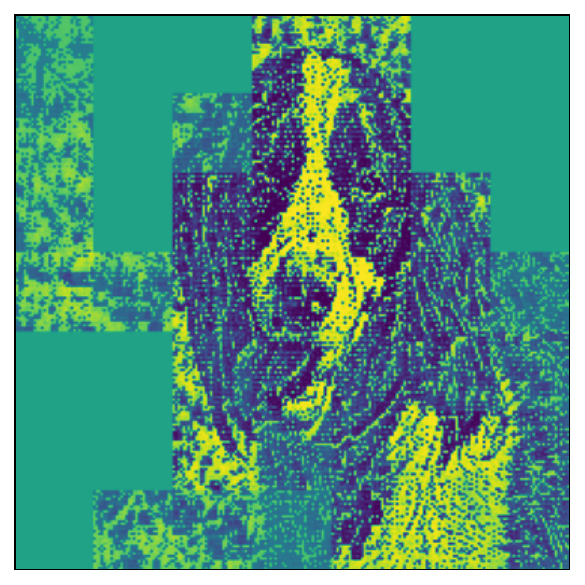}
\end{subfigure} 
\begin{subfigure}[t]{\bsize}
    \centering
    \includegraphics[width=\linewidth]{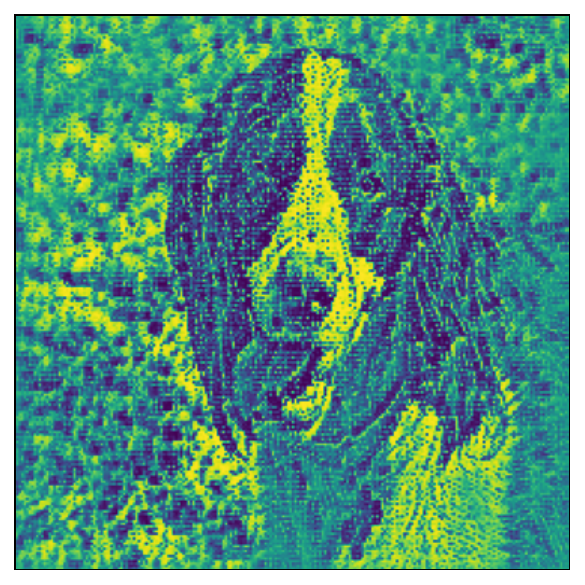}
\end{subfigure} 
\begin{subfigure}[t]{\bsize}
    \centering
    \includegraphics[width=\linewidth]{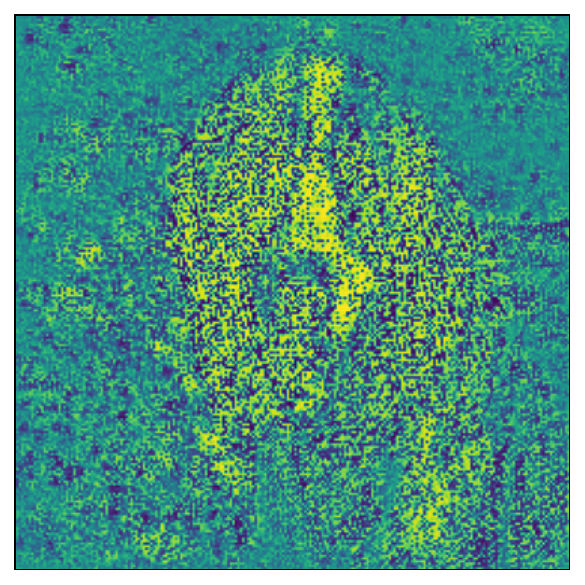}
\end{subfigure}
\begin{subfigure}[t]{\bsize}
    \centering
    \includegraphics[width=\linewidth]{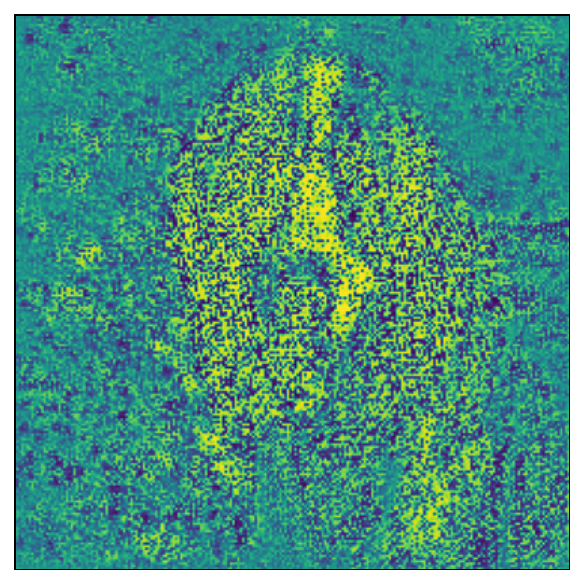}
\end{subfigure}

\begin{subfigure}[t]{\bsize}
    \centering
    \includegraphics[width=\linewidth]{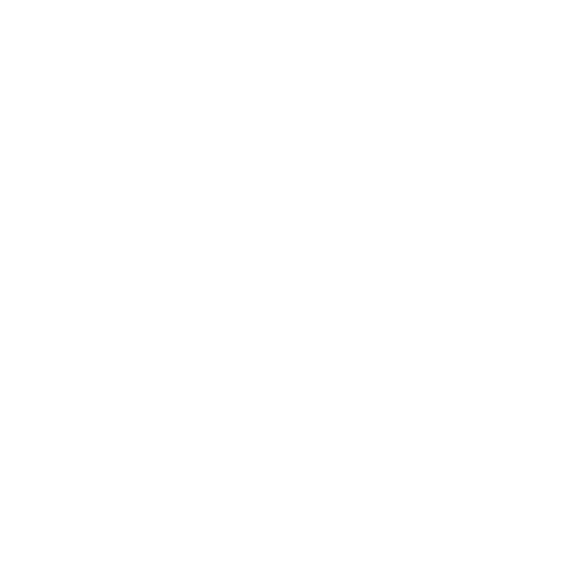}
    \caption{}
\end{subfigure}
\begin{subfigure}[t]{\bsize}
    \centering
    \includegraphics[width=\linewidth]{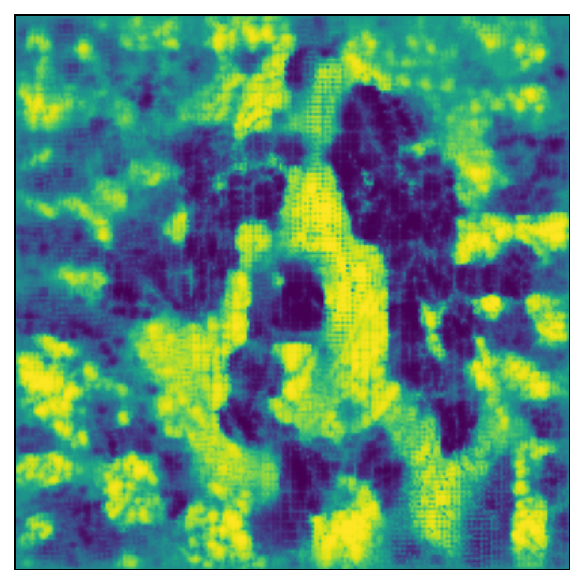}
    \caption{}
\end{subfigure}
\begin{subfigure}[t]{\bsize}
    \centering
    \includegraphics[width=\linewidth]{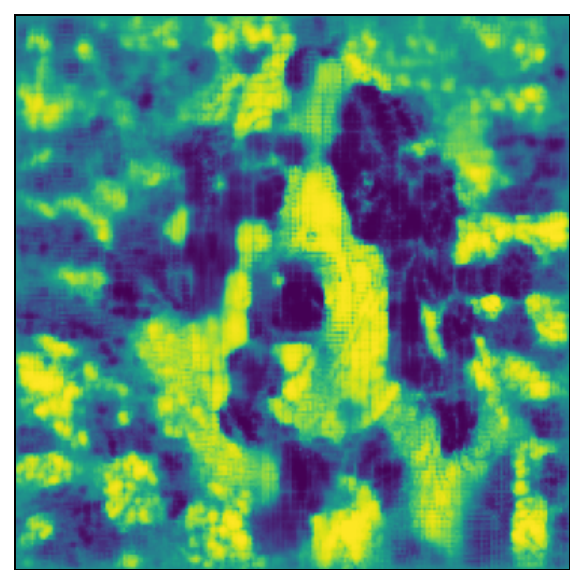}
    \caption{}
\end{subfigure}
\begin{subfigure}[t]{\bsize}
    \centering
    \includegraphics[width=\linewidth]{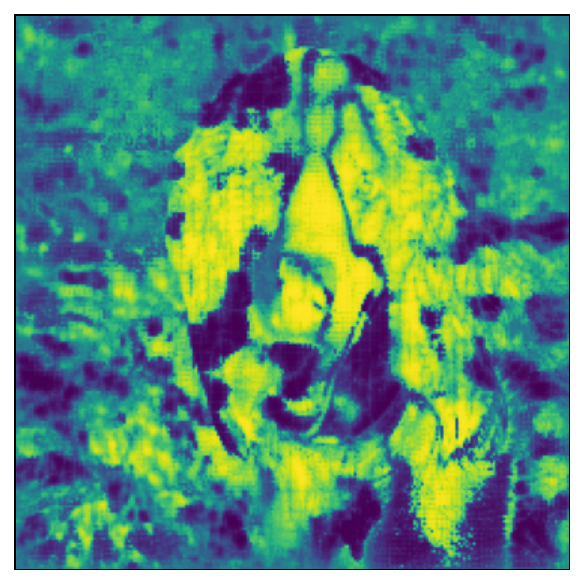}
    \caption{}
\end{subfigure}
\begin{subfigure}[t]{\bsize}
    \centering
    \includegraphics[width=\linewidth]{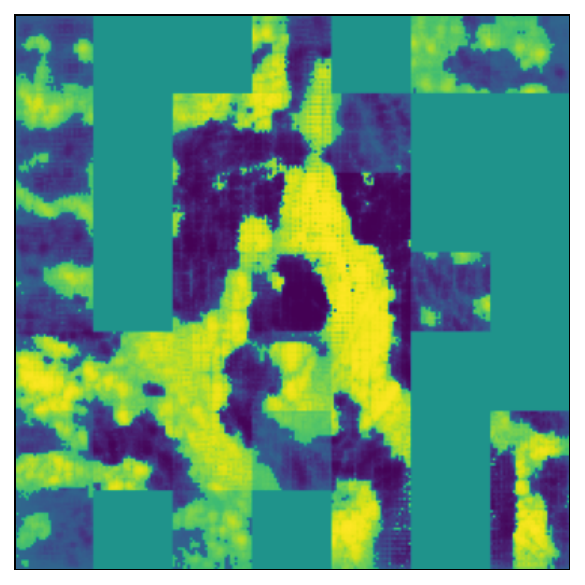}
    \caption{}
\end{subfigure} 
\begin{subfigure}[t]{\bsize}
    \centering
    \includegraphics[width=\linewidth]{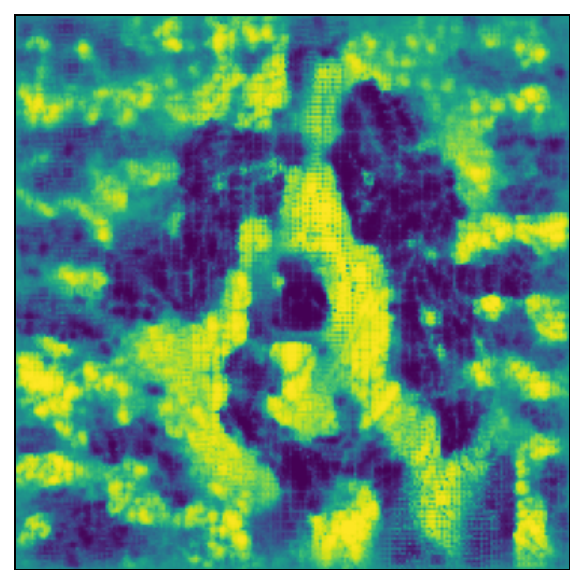}
    \caption{}
\end{subfigure} 
\begin{subfigure}[t]{\bsize}
    \centering
    \includegraphics[width=\linewidth]{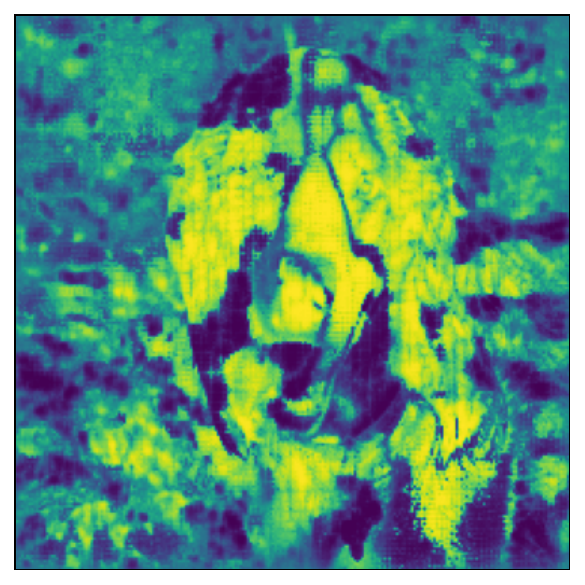}
    \caption{}
\end{subfigure}
\begin{subfigure}[t]{\bsize}
    \centering
    \includegraphics[width=\linewidth]{content/vis_explainer/white.pdf}
    \caption{}
\end{subfigure}
   \caption{
   \textbf{Sample Visualization of Explanations Outputs After Inverse Transformation Normalization.}
   This figure presents a visualization of the outputs from various explanation methods after applying the inverse transformation method. This normalization technique aligns different distributions while preserving their spectral properties and maintaining insensitivity to magnitude.
   }
\label{fig:vis-explainers-example}
\end{figure*}

\begin{figure*}[t]
\centering
\begin{subfigure}[t]{0.162\linewidth}
    \centering
    \includegraphics[width=\linewidth]{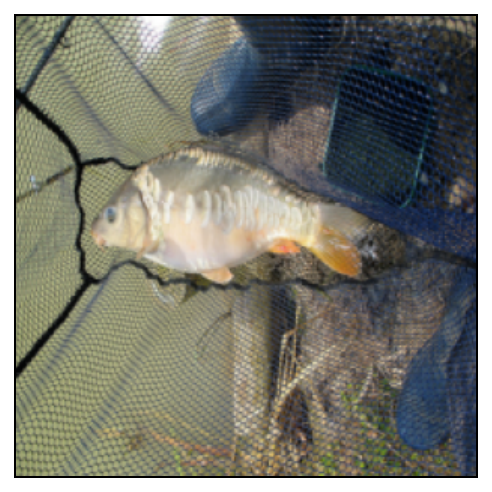}
\end{subfigure}
\begin{subfigure}[t]{0.162\linewidth}
    \centering
    \includegraphics[width=\linewidth]{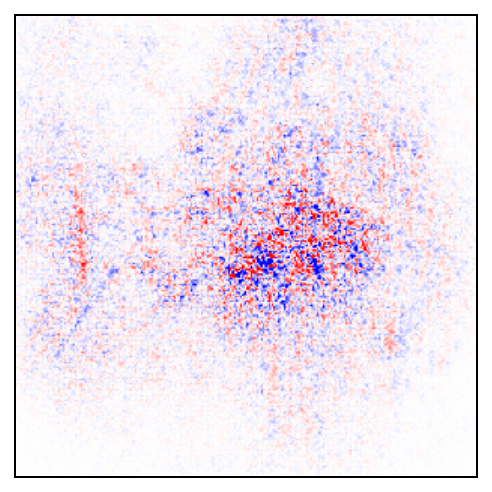}
\end{subfigure}
\begin{subfigure}[t]{0.162\linewidth}
    \centering
    \includegraphics[width=\linewidth]{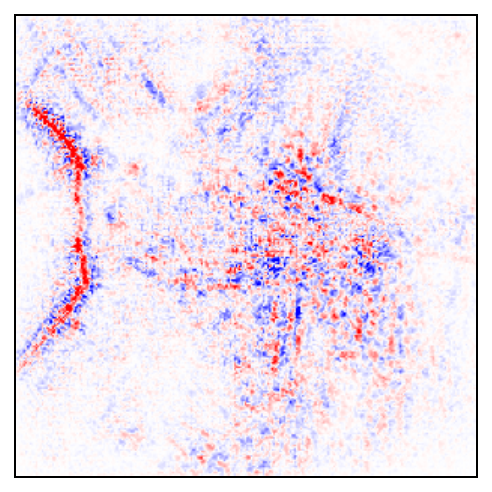}
\end{subfigure}
\begin{subfigure}[t]{0.162\linewidth}
    \centering
    \includegraphics[width=\linewidth]{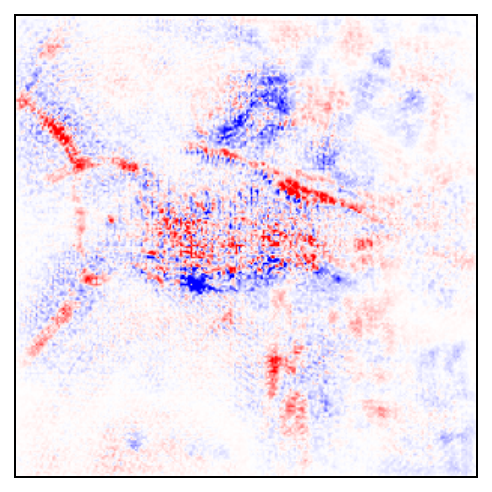}
\end{subfigure} 
\begin{subfigure}[t]{0.162\linewidth}
    \centering
    \includegraphics[width=\linewidth]{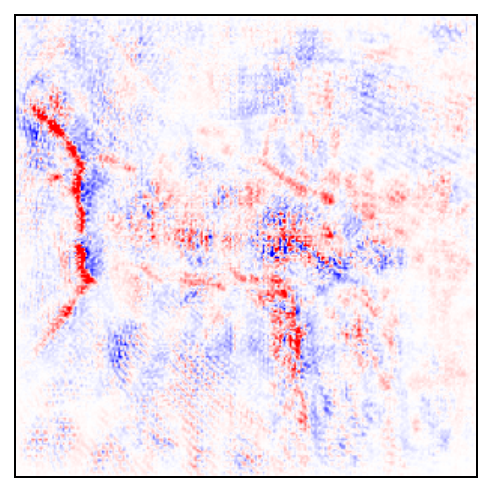}
\end{subfigure} 
\begin{subfigure}[t]{0.162\linewidth}
    \centering
    \includegraphics[width=\linewidth]{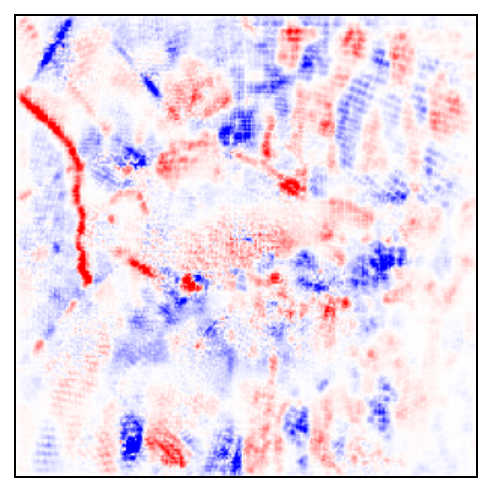}
\end{subfigure} 
\begin{subfigure}[t]{0.162\linewidth}
    \centering
    \includegraphics[width=\linewidth]{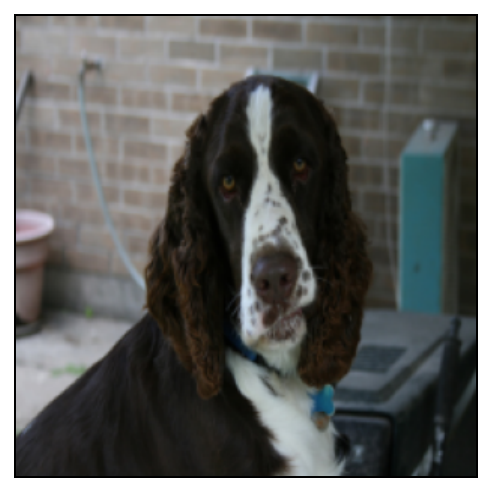}
\end{subfigure}
\begin{subfigure}[t]{0.162\linewidth}
    \centering
    \includegraphics[width=\linewidth]{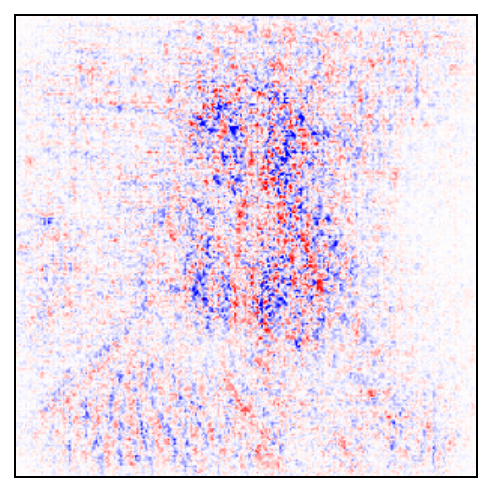}
\end{subfigure}
\begin{subfigure}[t]{0.162\linewidth}
    \centering
    \includegraphics[width=\linewidth]{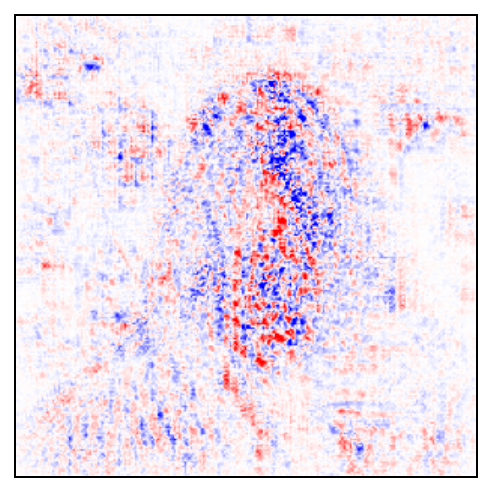}
\end{subfigure}
\begin{subfigure}[t]{0.162\linewidth}
    \centering
    \includegraphics[width=\linewidth]{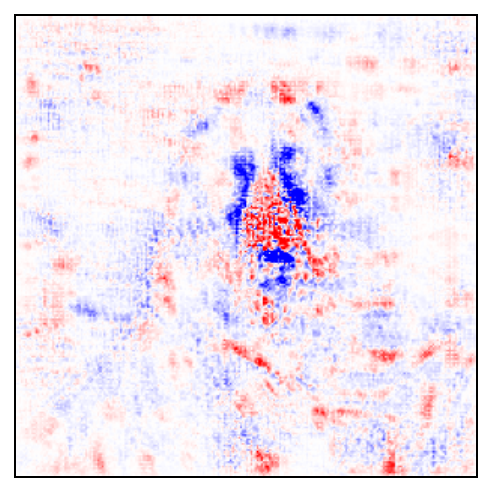}
\end{subfigure} 
\begin{subfigure}[t]{0.162\linewidth}
    \centering
    \includegraphics[width=\linewidth]{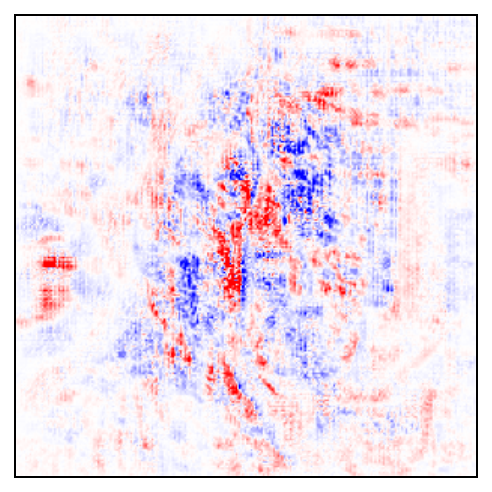}
\end{subfigure} 
\begin{subfigure}[t]{0.162\linewidth}
    \centering
    \includegraphics[width=\linewidth]{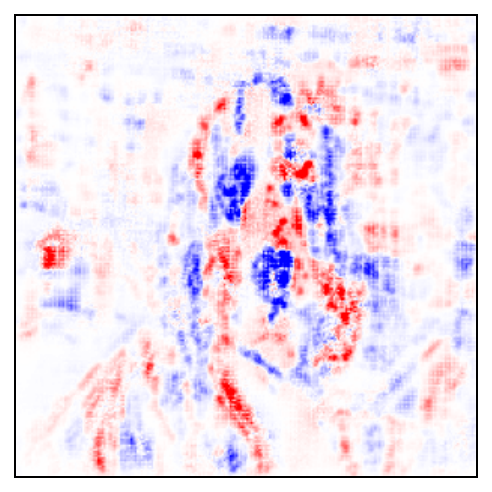}
\end{subfigure} 
\begin{subfigure}[t]{0.162\linewidth}
    \centering
    \includegraphics[width=\linewidth]{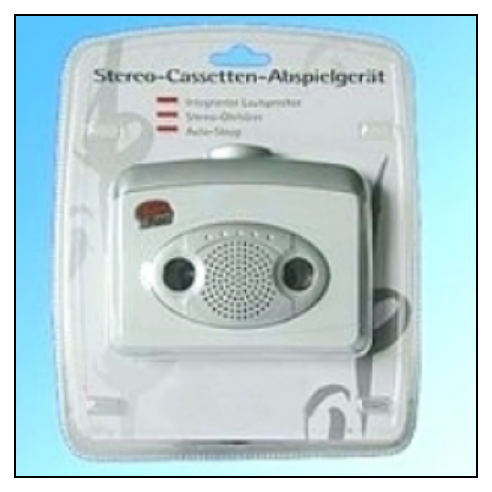}
\end{subfigure}
\begin{subfigure}[t]{0.162\linewidth}
    \centering
    \includegraphics[width=\linewidth]{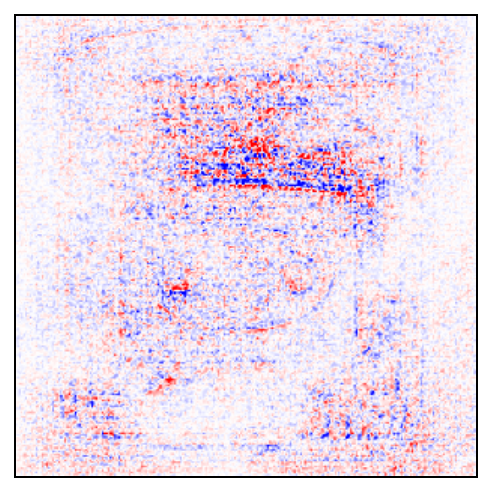}
\end{subfigure}
\begin{subfigure}[t]{0.162\linewidth}
    \centering
    \includegraphics[width=\linewidth]{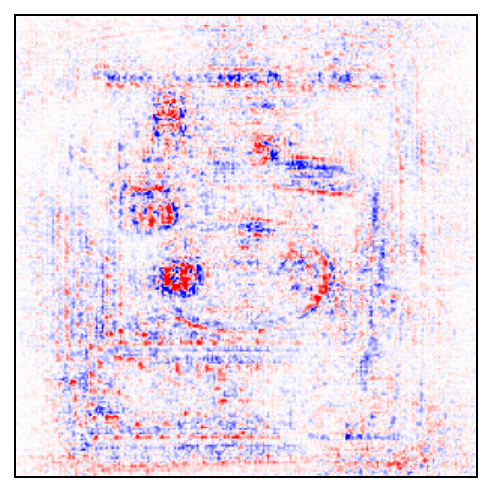}
\end{subfigure}
\begin{subfigure}[t]{0.162\linewidth}
    \centering
    \includegraphics[width=\linewidth]{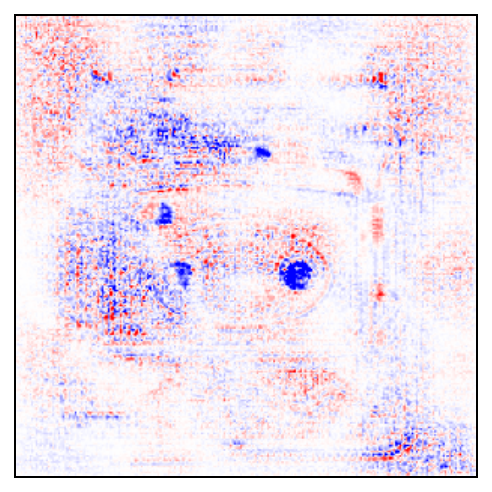}
\end{subfigure} 
\begin{subfigure}[t]{0.162\linewidth}
    \centering
    \includegraphics[width=\linewidth]{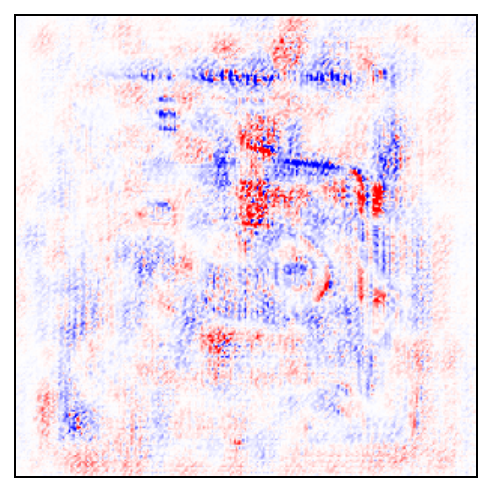}
\end{subfigure} 
\begin{subfigure}[t]{0.162\linewidth}
    \centering
    \includegraphics[width=\linewidth]{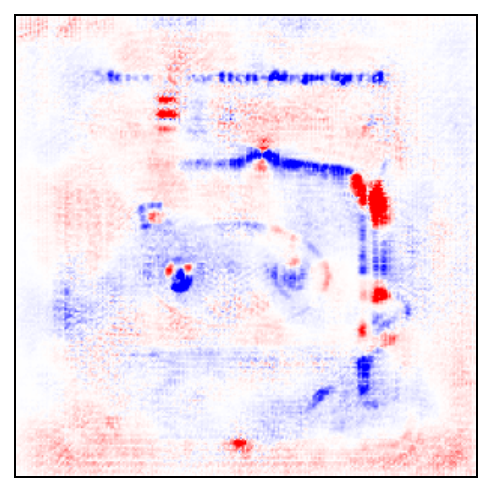}
\end{subfigure} 
\begin{subfigure}[t]{0.162\linewidth}
    \centering
    \includegraphics[width=\linewidth]{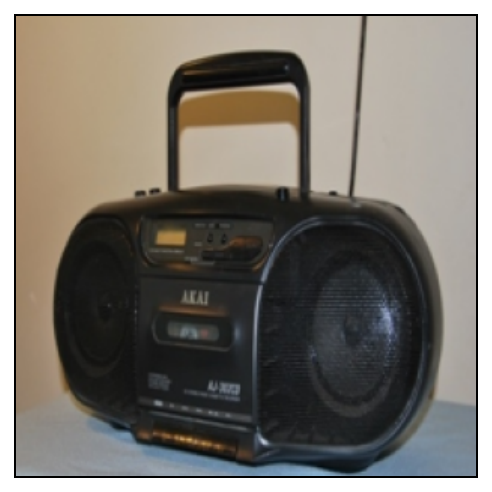}
    \caption{Input}
\end{subfigure}
\begin{subfigure}[t]{0.162\linewidth}
    \centering
    \includegraphics[width=\linewidth]{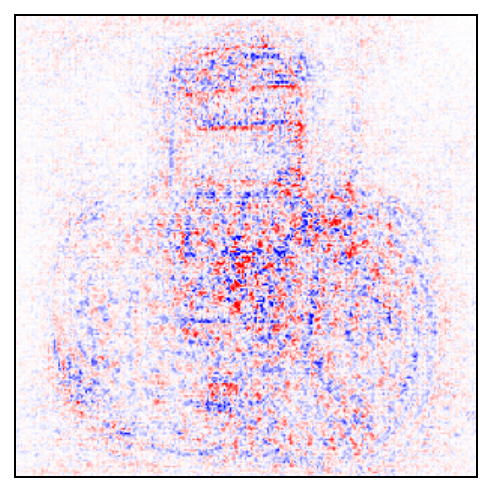}
    \caption{ReLU}
\end{subfigure}
\begin{subfigure}[t]{0.162\linewidth}
    \centering
    \includegraphics[width=\linewidth]{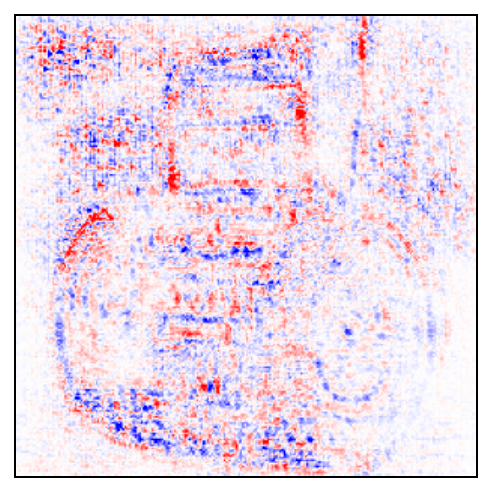}
    \caption{$\operatorname{SP(\beta=7)}$}
\end{subfigure}
\begin{subfigure}[t]{0.162\linewidth}
    \centering
    \includegraphics[width=\linewidth]{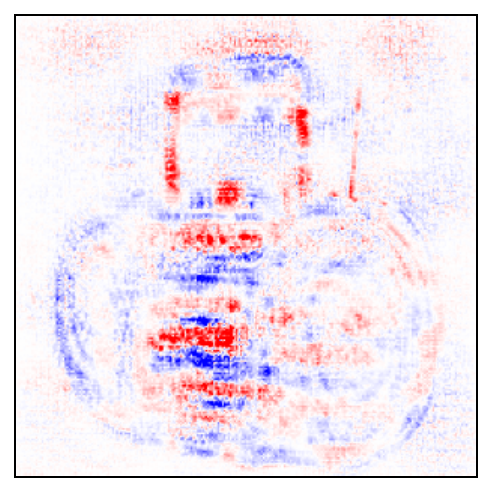}
    \caption{$\operatorname{SP(\beta=5)}$}
\end{subfigure} 
\begin{subfigure}[t]{0.162\linewidth}
    \centering
    \includegraphics[width=\linewidth]{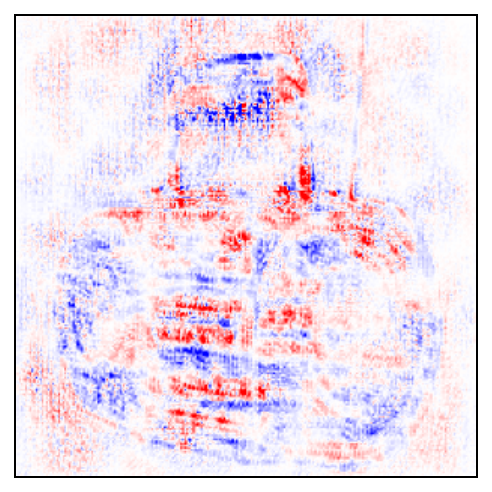}
    \caption{$\operatorname{SP(\beta=3)}$}
\end{subfigure} 
\begin{subfigure}[t]{0.162\linewidth}
    \centering
    \includegraphics[width=\linewidth]{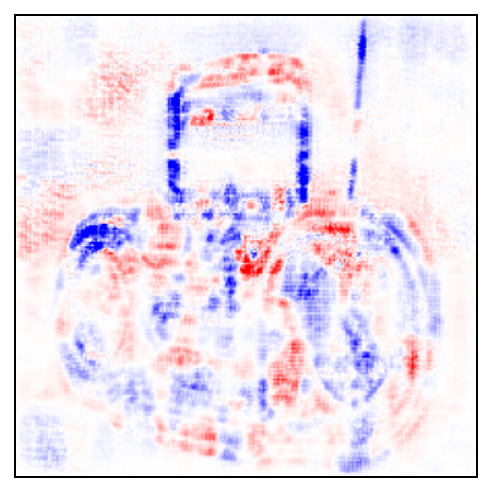}
    \caption{$\operatorname{SP(\beta=.9)}$}
\end{subfigure} 
   \caption{
   \textbf{Sample Visualization of VanillaGrad Explanations Across Smooth ReLU Parameterizations.}
   This figure presents additional samples from trained networks with different smooth parameterizations of ReLU, denoted by $\operatorname{SP}(\beta)$. 
   The goal is to illustrate how the reliance on high-frequency information in a ReLU network and its smooth variants manifests in VanillaGrad explanations.  
   This phenomenon has been discussed in more detail in~\cref{sec:controling-the-tps}.
   VanillaGrad explanations provide a local snapshot of the network's dependence on features across different frequencies, also known as harmonics~\cite{kadkhodaie_generalization_2024,marchetti_harmonics_2024}. Some of these harmonics can be "suppressed" through surrogates introduced by explanation methods.  
   To ensure an unaltered view of this effect, all visualizations in this figure utilize the simplest gradient-based explanation method, VanillaGrad, applied \textit{without any surrogates}.
   }
\label{fig:more-examples}
\end{figure*}

\end{document}